\documentclass{article}
\usepackage{graphicx} 
\usepackage[preprint]{neurips_2026}
\usepackage{multirow} 
\usepackage{booktabs}
\usepackage{wrapfig}
\usepackage{tikz}
\usetikzlibrary{
  arrows.meta, decorations.pathmorphing, decorations.pathreplacing,
  positioning, shapes.geometric, shapes.misc, calc, backgrounds, hobby, fit
}

\tikzset{
  arr/.style    = {-{Stealth[length=6pt,width=4pt]}, line width=1.5pt},
  darr/.style   = {-{Stealth[length=5pt,width=3pt]}, line width=1.0pt, dashed},
  pbox/.style   = {rectangle, draw=black, fill=gray!15, thick,
                   minimum size=1.0cm, rounded corners=2pt},
  vbox/.style   = {rectangle, draw=black, fill=gray!10,
                   thick, rounded corners=3pt,
                   minimum width=1.1cm, minimum height=0.6cm},
  outbox/.style = {rectangle, draw=black, fill=white, thick,
                   rounded corners=3pt, minimum width=0.85cm,
                   minimum height=0.85cm, font=\scriptsize},
  flowbox/.style = {rectangle, draw=black!50, fill=gray!5,
                    thick, rounded corners=4pt,
                    inner sep=6pt,
                    dash pattern=on 3pt off 2pt},
}

\newcommand{\noisygridicon}[2]{%
  \foreach \col/\row/\op in {#1}{%
    \fill[gray!\op!white, draw=black!40, thin]
      ({-0.18+\col*0.19},{0.01-\row*0.19}) rectangle ++(0.17,0.17);
  }%
  \draw[black!70, rounded corners=2pt, line width=0.9pt]
    (-0.20,-0.20) rectangle (0.20,0.20);
  \foreach \nx/\ny in {#2}{%
    \fill[black!60] (\nx,\ny) circle (1.4pt);
  }%
  \draw[black!50, dashed, rounded corners=2pt, line width=0.75pt]
    (-0.26,-0.26) rectangle (0.26,0.26);
}

\usepackage{amsmath,amssymb,amstext,amsthm}
\DeclareMathOperator*{\argmin}{arg\,min}
\usepackage{mathtools}
\usepackage{bm}
\usepackage[dvipsnames]{xcolor}
\usepackage{hyperref}
\hypersetup{
    colorlinks=true,
    linkcolor=blue!60!black,
    urlcolor=blue!60!black,
    citecolor=blue!60!black
}
 
\theoremstyle{plain}
\newtheorem{theorem}{Theorem}
\newtheorem{corollary}[theorem]{Corollary}
\newtheorem{lemma}[theorem]{Lemma}
 \newtheorem{proposition}[theorem]{Proposition}

\theoremstyle{definition}

\newcommand{\R}{\mathbb{R}}
\newcommand{\1}{\mathbf{1}}
\newcommand{\C}{\mathcal{C}}
\newcommand{\B}{\mathcal{B}}
\newcommand{\im}{\operatorname{Im}}
\newcommand{\tr}{\operatorname{tr}}
\newcommand{\id}{\operatorname{id}}
\newcommand{\inner}[2]{\langle #1, #2 \rangle_F}

\usepackage{algpseudocode}
\usepackage{algorithm}
\title{Learning Unbiased Permutations via  Flow Matching}
\author{%
  Yimeng Min \\
  Department of Computer Science\\
  Cornell University \\
  Ithaca, NY, USA \\
  \texttt{min@cs.cornell.edu} \\
  \And
  Carla P. Gomes\\
  Department of Computer Science\\
  Cornell University \\
  Ithaca, NY, USA \\
  \texttt{gomes@cs.cornell.edu}\\
}

\begin{document}

\maketitle
\begin{abstract}
Learning permutations is fundamental to sorting, ranking, and matching, but existing differentiable methods based on entropy-regularized Sinkhorn produce a single softened solution and collapse under ambiguity. We present PermFlow, a conditional flow matching framework that operates directly on the affine subspace of matrices with unit row and column sums. A closed-form tangent-space projector preserves these constraints exactly along every trajectory, by construction rather than through iterative correction, and a nearest-target coupling routes distinct noisy initializations toward distinct valid permutations. The result is a model that captures multimodal permutation distributions rather than collapsing them to a single mode. On a visual sorting task with blended-digit ambiguity and a symmetric linear assignment problem, PermFlow achieves high accuracy on unambiguous inputs and recovers both valid permutations under ambiguity, where Sinkhorn-based baselines structurally fail.

\end{abstract}
\section{Introduction}
Flow-based generative models have emerged as a powerful generative modeling paradigm, achieving widespread adoption across diverse domains such as computer vision and natural language processing. In these settings, the data lie in continuous spaces (e.g., high-dimensional vectors), where flows offer a principled mechanism for transforming simple noise into complex, structured outputs. Concretely, Flow Matching learns by regressing onto conditional velocity fields that transport individual samples; integrating this learned dynamics then yields a model capable of generating the full data distribution~\citep{chen2023flow2,tong2023improving,esser2024scaling,kornilov2024optimal,sriram2024flowllm,chen2023flow2}.

However, this paradigm becomes significantly more challenging when we move to discrete combinatorial structures. Objects such as permutations, matchings, and tours obey strict structural constraints. For instance, a permutation matrix must contain exactly one “1” in each row and each column. As a result, these objects reside on highly constrained geometric spaces rather than in the unconstrained ambient space $\mathbb{R}^{n\times n}$, making the direct application of standard flow-based methods nontrivial.

Most existing approaches deal with this difficulty by relaxing the problem. They replace permutations with continuous approximations (for example, using entropy-regularized Sinkhorn layers) and later round the solution back to a discrete one. While this makes optimization easier, it changes the geometry of the problem and can introduce bias or inconsistencies between training and testing. In particular, these methods usually produce a single softened solution and do not naturally represent uncertainty when multiple permutations are equally valid~\citep{sinkhorn1967diagonal}.
This is a real limitation in many settings where the supervision is set-valued rather than single-valued: there may be many feasible (and essentially equally good) permutations consistent with the input.

For example, in matching or assignment problems with symmetries, repeated or indistinguishable items, or near-ties in cost, multiple permutations can explain the same observation. Yet a deterministic pipeline will still collapse everything into a single argmax permutation, committing to one choice and discarding alternative valid orderings. In contrast, what we often want is not one “best” permutation but a distribution over optimal permutations, allowing us to quantify ambiguity, sample diverse valid solutions, and propagate uncertainty downstream.

Furthermore, entropy regularization introduces a well-known bias: the entropic term favors high-entropy  solutions, pushing mass toward the interior of the Birkhoff polytope and away from permutation vertices. This alters the optimization landscape relative to the original combinatorial problem and, when combined with discrete rounding at inference time, leads to a mismatch between training and inference objectives~\citep{janati2020debiased}.

In this work, we present Affine Permutation Flow (PermFlow). Instead of relaxing the structural constraints into an unconstrained continuous space, we directly design flow dynamics that respect the geometry of permutation matrices from the outset. We use Conditional Flow Matching (CFM) to learn velocity fields directly. By projecting the learned velocities onto the correct tangent space (which enforces zero row and column sums), we ensure that the structural constraints are preserved throughout training. This allows us to train flow-based models without relying on entropy regularization or ad hoc projection steps.
Our approach generates a set of permutations by sampling different initial noise states and integrating the learned flow dynamics. As mentioned above, in many tasks, the correct ordering is inherently ambiguous, and multiple permutations may be equally consistent with the input. Unlike deterministic relaxations such as Sinkhorn, which collapse predictions to a single mode, our stochastic flow model captures this ambiguity by representing a distribution over permutations and sampling multiple valid combinatorial solutions~\citep{emami2018learning,mena2018learning}.

\section{Related Work}
\label{sec:related}
The dominant paradigm for differentiable permutation prediction relaxes
permutation matrices $\mathcal{P}$ to the Birkhoff polytope $\mathcal{B}_+$
via entropy-regularized optimal transport
\citep{sinkhorn1964relationship, cuturi2013sinkhorn}, with applications to
latent permutation learning \citep{mena2018learning}, sorting
\citep{grover2019stochastic}, and combinatorial RL
\citep{emami2018learning}. The entropic term biases solutions toward the
polytope interior \citep{janati2020debiased}, and deterministic Sinkhorn
cannot represent multimodal posteriors; Gumbel-Sinkhorn
\citep{mena2018learning} adds noise but empirically concentrates around a
single soft solution. Geometric alternatives include stick-breaking
parameterizations of $\mathcal{B}_+$ \citep{linderman2018reparameterizing}
and Riemannian Langevin sampling with retraction-based integrators
\citep{birdal2019probabilistic}. We share the geometric viewpoint but
work in the affine setting, which admits a closed-form orthogonal
projector and removes the need for retractions, and we learn
input-conditioned dynamics rather than fixed MCMC. Our construction
builds on flow matching \citep{lipmanflow} and Conditional Flow Matching
\citep{tong2023improving}, which have been extended to optimal-transport
couplings \citep{kornilov2024optimal}, Riemannian manifolds
\citep{chen2023flow2}, and discrete state spaces
\citep{campbell2024generative, gat2024discrete}.
By contrast, generic manifold-ODE schemes enforce constraints only
\emph{approximately}: under a Lipschitz assumption on the vector field,
the violation is bounded by a factor growing as $e^{Lt}$ in integration
time and $\mathcal{O}(h^p)$ in step size $h$, where $L$ is the Lipschitz
constant \citep{hairer1993solving}. This exponential amplification
carries over to learned vector fields, where projection or retraction
steps are needed to keep iterates close to the constraint set
\citep{lou2020neural, falorsi2020neural, mathieu2020riemannian,
chen2023flow2}, so feasibility degrades with both network expressivity
(which controls $L$) and integration horizon.

Our contributions are threefold:
1. We propose a CFM framework for learning distributions over permutations. Our method parameterizes a velocity field directly on the affine subspace defined by the row- and column-sum constraints, and projects it onto the corresponding tangent space. As a result, these constraints hold exactly at every $t$ and for any $f_\theta$, with no Lipschitz or smoothness assumption beyond ODE existence. This avoids entropic smoothing and explicit positivity constraints, in contrast to Sinkhorn-style relaxations whose feasibility only holds in the limit and to generic manifold-ODE schemes whose constraint violation grows as w.r.t. to time/iterations. 2. We apply the framework to ambiguous permutation prediction, where multiple orderings are equally consistent with the input. Sinkhorn and other deterministic relaxations collapse this ambiguity into a single solution. Our stochastic flow instead learns a conditional distribution over valid permutations, allowing explicit representation and sampling of multiple combinatorial modes. 3. Empirically, our model achieves competitive performance, showing that conditional flow matching can be extended beyond continuous domains such as vision and language to structured combinatorial objects. To our knowledge, this is one of the earliest applications of flow matching to inherently discrete permutation learning.

\section{Background}
\subsection{Sinkhorn Iteration for Soft Permutations}
Given a score matrix $S \in \mathbb{R}^{n \times n}$, the goal of permutation prediction is often formulated as a linear assignment problem
\begin{equation}
\max_{P \in \mathcal{P}_n} \langle S, P \rangle ,
\end{equation}
where $\mathcal{P}_n$ is the set of $n\times n$ permutation matrices. Because $\mathcal{P}_n$ is discrete, a common relaxation replaces it with the Birkhoff polytope
\begin{equation}
\mathcal{B}_+ =
\{X \in \mathbb{R}_+^{n\times n} \mid
X\mathbf{1}=\mathbf{1},\;
X^\top\mathbf{1}=\mathbf{1}\},
\end{equation}
the set of doubly stochastic matrices. To obtain a differentiable solution, entropy regularization is introduced:
\begin{equation}
\max_{X\in\mathcal{B}_+}
\langle S,X\rangle + \tau H(X),
\end{equation}
where $H(X)$ is the Shannon entropy and $\tau>0$ is a temperature parameter. The solution takes the form
\begin{equation}
X^\star = \mathrm{Diag}(u)\,K\,\mathrm{Diag}(v),
\qquad
K=\exp(S/\tau),
\end{equation}
where scaling vectors $u$ and $v$ enforce row and column normalization.  
These vectors can be computed efficiently by alternating row and column normalization~\citep{sinkhorn1964relationship}. In practice, the resulting matrix $X$ is a soft permutation (doubly stochastic matrix).  
As $\tau \to 0$, the solution becomes closer to a permutation matrix but gradients become unstable. Larger values of $\tau$ produce smoother but less discrete solutions.

Although Sinkhorn provides a useful differentiable relaxation, it typically produces a \emph{single} softened solution. When multiple permutations are equally valid, for example due to symmetries or near-ties in scores, the entropy-regularized objective selects one solution in the Birkhoff polytope and collapses the others. As a result, Sinkhorn-based methods do not naturally represent uncertainty or generate diverse valid permutations.

\subsection{Flow Matching and Conditional Flow Matching}

Flow Matching is a generative framework that learns a continuous transport process transforming samples from a source distribution $p_0$ into samples from a target distribution $p_1$.
Given noisy samples $\mathbf{x}_0 \sim p_0$ and clean samples $\mathbf{x}_1 \sim p_1$, the goal is to learn a time-dependent velocity field
$
u_t(\mathbf{x}_t)
$ that transports probability mass from $p_0$ to $p_1$. Let $\psi_t(\mathbf{x}_0 \mid \mathbf{x}_1)$ denote an interpolant that defines the trajectory between $\mathbf{x}_0$ and $\mathbf{x}_1$. The interpolant satisfies the boundary conditions
\begin{equation}
\psi_0(\mathbf{x}_0 \mid \mathbf{x}_1) = \mathbf{x}_0,
\qquad
\psi_1(\mathbf{x}_0 \mid \mathbf{x}_1) = \mathbf{x}_1 .
\end{equation}

At time $t \in [0,1]$, the intermediate point is $
\mathbf{x}_t = \psi_t(\mathbf{x}_0 \mid \mathbf{x}_1).$ The conditional velocity field is defined as the time derivative of the interpolant:
\begin{equation}
u_t(\mathbf{x}_t \mid \mathbf{x}_1)
=
\frac{d}{dt}\psi_t(\mathbf{x}_0 \mid \mathbf{x}_1).
\end{equation}

For a velocity field $u_t$ to generate the evolving distribution $p_t$, it must satisfy the continuity equation
\begin{equation}
\frac{\partial}{\partial t} p_t(\mathbf{x})
=
-\nabla \cdot \big(p_t(\mathbf{x}) u_t(\mathbf{x})\big),
\end{equation}
where $\nabla \cdot$ denotes the divergence operator. Flow matching objective trains a parameterized velocity model $u_\theta(\mathbf{x}_t,t)$ to approximate the true marginal velocity field:
\begin{equation}
\mathcal{L}_\text{FM}
=
\mathbb{E}_{t,\mathbf{x}_t}
\left[
\left\|
u_\theta(\mathbf{x}_t,t) - u_t(\mathbf{x}_t)
\right\|^2
\right].
\end{equation}

However, computing the marginal velocity $u_t(\mathbf{x}_t)$ requires integrating over all trajectories connecting $p_0$ and $p_1$, which is generally intractable. To address this issue, CFM conditions the velocity field on individual data samples $\mathbf{x}_1$. The resulting CFM objective becomes
\begin{equation}
\mathcal{L}_\text{CFM}
=
\mathbb{E}_{t,\mathbf{x}_0,\mathbf{x}_1}
\left[
\left\|
u_\theta(\mathbf{x}_t,t) -
u_t(\mathbf{x}_t \mid \mathbf{x}_1)
\right\|^2
\right].
\end{equation}

Importantly, the conditional and unconditional objectives share the same gradient:
\begin{equation}
    \mathcal{L}_\text{FM}
=
\nabla_\theta \mathcal{L}_\text{CFM},
\end{equation}

making the conditional formulation both tractable and theoretically consistent~\citep{lipmanflow}.

\section{Geometry-Preserving Permutation Flow}
Before describing the construction, we highlight the kind of guarantee
we aim for. Existing approaches keep iterates feasible only
\emph{approximately}: Sinkhorn alternates row and column normalization
and leaves a residual that shrinks with iteration count, while generic
manifold-ODE schemes bound the constraint violation only up to a factor
that grows as $e^{Lt}$ under a Lipschitz assumption on the vector field, with $L$ denotes the Lipschitz constant.
In both cases, feasibility degrades with network expressivity or
integration time.
\begin{wrapfigure}{r}{0.5\textwidth}
\vspace{-10pt}
\centering
\begin{tikzpicture}[
    >=Stealth,
    font=\small,
    axislabel/.style={font=\footnotesize},
    methodlabel/.style={font=\footnotesize\itshape},
    scale=0.72, transform shape
]
\draw[->, thick] (0,0) -- (8.5,0) node[right] {iteration / time $t$};
\draw[->, thick] (0,0) -- (0,4) node[above right, font=\large, xshift=-2pt] {constraint violation};

\node[axislabel, anchor=north east, gray!60!black] at (7.2,-0.05) {feasible};

\draw[thick, blue!55!black, smooth, samples=80, domain=0.05:8, opacity=0.7]
    plot (\x, {2.8*exp(-0.55*\x) + 0.25});
\node[blue!55!black, methodlabel, anchor=west, font=\large, opacity=0.85]
    at (4.2, 0.8) {Sinkhorn: residual $\to$ const $> 0$};

\draw[<->, blue!55!black, thin, opacity=0.7] (7.8, 0.0) -- (7.8, 0.27);
\node[blue!55!black, anchor=west, opacity=0.85]
    at (7.85, 0.43) {residual};

\draw[thick, red!65!black, smooth, samples=80, domain=0:4.2, opacity=0.7]
    plot (\x, {0.08*exp(0.85*\x)});
\node[red!65!black, methodlabel, anchor=west, opacity=0.85, font=\large]
    at (3.6, 3.2) {Manifold-ODE: violation $\sim e^{Lt}$};

\draw[line width=6pt, green!60!black, opacity=0.18]
    (0,0) -- (8.2,0);
\draw[line width=2.2pt, green!45!black]
    (0,0) -- (8.2,0);
\foreach \x in {1, 2, 3, 4, 5, 6, 7, 8} {
    \fill[green!40!black] (\x, 0) circle (2pt);
}

\node[green!35!black, font=\bfseries, anchor=west]
    at (2.5, -0.55) {PermFlow (Ours): violation $\equiv 0$};
\draw[->, green!40!black, thick] (5.2, -0.4) -- (5.2, -0.05);

\end{tikzpicture}
\caption{Feasibility behavior of different methods. Sinkhorn leaves a non-vanishing residual; manifold-ODE schemes incur exponentially growing violation $\sim e^{Lt}$. \textbf{PermFlow (ours)} maintains exact feasibility at every step.}
\label{fig:feasibility-comparison}
\vspace{-10pt}
\end{wrapfigure}
We aim for a stronger property. We work on the affine subspace of
matrices with row and column sums equal to one, and build a projector
$\mathcal{C}$ such that $\mathcal{C}(U)$ lies in the corresponding
tangent space for \emph{every} matrix $U$, as an algebraic identity.
Substituted into the flow, the row- and column-sum constraints hold \emph{exactly} at every $t$, for any $f_\theta$, with no Lipschitz or smoothness assumption beyond ODE existence. Figure~\ref{fig:feasibility-comparison} compares this guarantee against existing strategies.

We model permutations through continuous dynamics on the affine constraint
manifold. Let $\1 \in \R^n$ denote the all-ones vector, and endow
$\R^{n\times n}$ with the Frobenius inner product
$\inner{A}{B} = \tr(A^\top B)$. The affine constraint manifold is the set
of matrices whose rows and columns each sum to one,
\begin{equation}
    \B_a \;=\; \{X \in \R^{n\times n} : X\1 = \1,\ X^\top \1 = \1\},
\end{equation}
and at any point of $\B_a$ the admissible directions of motion are those
that preserve these row and column sums:
\begin{equation}
    T \;=\; \{\Delta \in \R^{n\times n} : \Delta\1 = 0,\ \Delta^\top\1 = 0\}.
\end{equation}
Note that $T$ does not depend on the base point, since $\B_a$ is affine.

\begin{algorithm}[htbp]
\caption{PermFlow for Permutation Distribution Learning}
\label{alg:general-perm-cfm}
\begin{algorithmic}[1]
\Require
  Input context $\mathbf{c}$;
  valid target permutations $\{P^1, \ldots, P^M\} \subset \{0,1\}^{n\times n}$
  (set $M=1$ for unambiguous inputs);
  encoder $g_\phi$; velocity network $f_\theta$;
  noise scale $\sigma_0$; ODE steps $S$; samples $K$, input context $\mathbf{c}$.

\State \textbf{Tangent projector.}
$\;\mathcal{C}(U) \gets U
  -\tfrac{1}{n}U\mathbf{1}\mathbf{1}^\top
  -\tfrac{1}{n}\mathbf{1}\mathbf{1}^\top U
  +\tfrac{1}{n^2}\mathbf{1}\mathbf{1}^\top U \mathbf{1}\mathbf{1}^\top$

\State \textbf{Training.}
\For{each minibatch $(\mathbf{c},\, P^1, \ldots, P^M)$}
    \State $\mathbf{h} \gets g_\phi(\mathbf{c})$
        \hfill\Comment{encode context once per batch}
    \State $\varepsilon \sim \mathcal{N}(0, I)$;\quad
           $\tilde{X}_0 \gets \mathcal{C}\!\left(\tfrac{1}{n}\mathbf{1}\mathbf{1}^\top + \sigma_0\,\varepsilon\right)$
        \hfill\Comment{project perturbed  matrix onto tangent space}
    \State $P^* \gets \arg\min_{P^m} \|\tilde{X}_0 - P^m\|_F$
        \hfill\Comment{coupling: nearest valid target wins}
    \State $t \sim \mathrm{Unif}(0,1)$;\quad
           $X_t \gets (1-t)\,\tilde{X}_0 + t\,P^*$
        \hfill\Comment{interpolated path point}
    \State $\varepsilon' \sim \mathcal{N}(0,I)$;\quad
           $\hat{X}_t \gets \mathcal{C}(X_t + \sigma_t\,\varepsilon')$
        \hfill\Comment{perturb interpolant and re-project}
    \State $\dot{X} \gets P^* - \tilde{X}_0$
        \hfill\Comment{constant target velocity}
    \State $\mathcal{L} \gets \|v_\theta(\hat{X}_t, \mathbf{h}) - \dot{X}\|_F^2$;\quad
           update $(\phi,\theta)$ via $\nabla\mathcal{L}$
        \hfill\Comment{CFM regression loss}
\EndFor

\vspace{0.3em}
\State \textbf{Inference} \textit{($K$ trajectories, $S$ Euler steps each).}
\State $\mathbf{h} \gets g_\phi(\mathbf{c})$
    \hfill\Comment{encode once, shared across all $K$ trajectories}
\For{$k = 1, \ldots, K$}
    \State $\varepsilon^{(k)} \sim \mathcal{N}(0, I)$;\quad
           $X^{(k)} \gets \mathcal{C}\!\left(\tfrac{1}{n}\mathbf{1}\mathbf{1}^\top + \sigma_0\,\varepsilon^{(k)}\right)$
        \hfill\Comment{fresh noise $\Rightarrow$ diverse trajectory starts}
    \For{$s = 0, \ldots, S-1$}
        \State $X^{(k)} \gets X^{(k)} + \tfrac{1}{S} \cdot v_\theta(X^{(k)}, \mathbf{h})$
            \hfill\Comment{Euler step}
    \EndFor
    \State $\hat{P}^{(k)} \gets \mathrm{Round}(X^{(k)})$
        \hfill\Comment{Hungarian rounding to nearest permutation}
\EndFor
\State \Return $\{\hat{P}^{(k)}\}_{k=1}^K$
    \hfill\Comment{diverse samples covering valid permutation modes}
\end{algorithmic}
\end{algorithm}

To map arbitrary matrices onto $T$, we use the centering operator
$\C : \R^{n\times n} \to \R^{n\times n}$,
\begin{equation}
    \C(U)
    \;=\;
    U
    \;-\; \tfrac{1}{n} U \1\1^\top
    \;-\; \tfrac{1}{n} \1\1^\top U
    \;+\; \tfrac{1}{n^2} \1\1^\top U \1\1^\top.
\end{equation}
The first correction term subtracts each row's mean, the second subtracts
each column's mean, and the third adds back the global mean to avoid
double-counting. A direct calculation confirms that $\C(U)\1 = 0$ and
$\1^\top \C(U) = 0$, so $\C(U)$ always lies in $T$.

\begin{theorem}[Geometric correctness of PermFlow]
\label{thm:main}
The centering operator $\C$ is the orthogonal projector onto $T$.
Consequently, the ODE
\begin{equation}
    \dot{X}_t \;=\; \C\bigl(f_\theta(X_t, t)\bigr), \qquad X_0 \in \B_a,
\end{equation}
satisfies $X_t \in \B_a$ for all $t \geq 0$, for any choice of $f_\theta$.
\end{theorem}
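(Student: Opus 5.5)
The plan is to write $\C$ in factored form and read off the projector properties. Set $J = \tfrac{1}{n}\1\1^\top$, so that $J$ is symmetric, $J^2 = J$, $J\1 = \1$, and $(I-J)\1 = 0$. Expanding the product gives
\begin{equation}
    \C(U) \;=\; (I-J)\,U\,(I-J) \;=\; U - UJ - JU + JUJ ,
\end{equation}
which is exactly the definition of $\C$, because $\1\1^\top U \1\1^\top/n^2 = JUJ$.

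With this form, the projector properties are short checks.

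\textbf{Range inside $T$.} We have $\C(U)\1 = (I-J)U(I-J)\1 = 0$, and likewise $\1^\top \C(U) = 0$. This is the direct calculation the paper already mentions.

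\textbf{Identity on $T$.} If $\Delta \in T$, then $J\Delta = \tfrac1n \1(\1^\top\Delta) = 0$ and $\Delta J = 0$. Hence $\C(\Delta) = \Delta$, so the range of $\C$ is all of $T$ and $\C^2 = \C$.

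\textbf{Self-adjointness.} Using the symmetry of $I-J$ and the cyclicity of the trace,
\begin{equation}
    \inner{\C(U)}{V} \;=\; \tr\bigl((I-J)U^\top(I-J)V\bigr) \;=\; \tr\bigl(U^\top (I-J)V(I-J)\bigr) \;=\; \inner{U}{\C(V)} .
\end{equation}
An idempotent, self-adjoint linear map with range $T$ is the orthogonal projector onto $T$. Equivalently, $U - \C(U)$ is orthogonal to every $\Delta\in T$, since $\inner{U-\C(U)}{\Delta} = \inner{U}{\Delta - \C(\Delta)} = 0$.

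For the ODE, let $X_t$ be any solution on its interval of existence (only existence is assumed) and define $r(t) = X_t\1 - \1$ and $c(t) = X_t^\top\1 - \1$. These are absolutely continuous, and their derivatives are $\dot r(t) = \C(f_\theta(X_t,t))\1 = 0$ and $\dot c(t) = \C(f_\theta(X_t,t))^\top\1 = 0$ by the range property. The initial condition $X_0\in\B_a$ gives $r(0)=c(0)=0$, so $r\equiv c\equiv 0$. An equivalent argument: $X_t - X_0 = \int_0^t \C(f_\theta)\,ds$ lies in the closed linear subspace $T$, and $\B_a = X_0 + T$.

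The same linearity argument shows that Euler steps preserve $\B_a$ exactly, since each increment lies in $T$. No Lipschitz bound is needed, because the conclusion depends only on $\dot X_t$ lying in $T$ pointwise. The only step needing care is the regularity needed to integrate $\dot r = 0$. I would handle this by stating the result for any absolutely continuous (e.g.\ Carath\'eodory) solution, where the fundamental theorem of calculus applies.
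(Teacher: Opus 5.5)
Your proof is correct and follows essentially the same route as the paper: the sandwich identity $\C(U) = (I-J)U(I-J)$, range contained in $T$ together with $\C$ acting as the identity on $T$, self-adjointness via trace cyclicity, and the vanishing derivatives of $X_t\1 - \1$ and $X_t^\top\1 - \1$. The only differences are minor. You derive idempotency from the range and fixed-point properties rather than from $(I-J)^2 = I-J$, and you add a slightly more careful remark on absolutely continuous solutions and on exact preservation under Euler steps.
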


A full proof is given in Appendix~\ref{app:proof}. Three features make the
argument particularly clean. First, the four-term formula for $\C$ can be
rewritten as a single sandwich,
$\C(U) = (I - J)\,U\,(I - J)$, where $J = \tfrac{1}{n}\1\1^\top$ is the
rank-one matrix that averages a vector. Since $I - J$ is itself an orthogonal
projector, every property we need follows from the single identity
$(I - J)^2 = I - J$: idempotency, self-adjointness, and the fact that $\C$
maps onto exactly $T$. Second, the construction is not specific to
permutation matrices. Any affine subspace of $\R^{n\times n}$ defined by
linear row or column constraints admits an analogous projector built the
same way, so the trick is an instance of a general geometric principle.
Third, the flow-invariance claim requires no Gr\"onwall inequality, no
Lipschitz assumption on $f_\theta$, and no smoothness beyond what is needed
for the ODE to exist. The constraint is preserved as an exact algebraic
identity along trajectories, not as a small-error bound that grows with
time. This is why PermFlow can claim that the row and column sums are
preserved \emph{exactly} rather than approximately, a strictly stronger
guarantee than entropy-regularized relaxations can offer.

Putting the pieces together, we parameterize a velocity field with a neural
network $f_\theta(X,t) \in \R^{n\times n}$ and project its output onto the
tangent space before using it,
\begin{equation}
    v_\theta(X,t) \;=\; \C\bigl(f_\theta(X,t)\bigr).
\end{equation}
The permutation dynamics are then given by the neural ODE
\begin{equation}
    \frac{dX_t}{dt} \;=\; v_\theta(X_t, t), \qquad X_0 \in \B_a.
\end{equation}
Because the velocity always lies in $T$, the row and column sums of $X_t$ do
not change along a trajectory, so $X_t$ stays on $\B_a$ for all $t$ no matter
what the network outputs.

\begin{figure}\label{fig:1}
\centering
\begin{tikzpicture}[node distance=4mm,scale=0.8]

\begin{scope}[yshift=1.0cm,scale=0.6]
  \node[font=\small, black!60] at (0, 1.5) {$X_0 = \tfrac{1}{n}\mathbf{1}\mathbf{1}^\top$};
  \foreach \col in {0,1,2}{
    \foreach \row in {0,1,2}{
      \fill[gray!25, draw=black!40]
        ({-0.62+\col*0.42},{0.22-\row*0.42}) rectangle ++(0.40,0.40);
    }
  }
  \draw[black, line width=1.2pt, rounded corners=2pt]
    (-0.64,-0.64) rectangle (0.64,0.64);

  \node[font=\Large, black!70] at (1.7, 0) {$+$};

  \begin{scope}[xshift=3.1cm]
    \node[font=\small, black!60] at (0, 1.5) {$\sigma_0\,\varepsilon$};
    \foreach \x/\y/\s in {
      -0.38/ 0.30/2.5,  0.10/ 0.42/1.8,  0.35/ 0.10/2.8,
       0.22/-0.35/2.2, -0.20/-0.28/2.0, -0.40/-0.05/1.5,
       0.00/ 0.00/1.2,  0.42/ 0.38/1.9, -0.10/ 0.18/1.3,
       0.28/-0.10/1.6}{
      \fill[black!55] (\x,\y) circle (\s pt);
    }
    \draw[black!60, thick, rounded corners=2pt, dashed]
      (-0.55,-0.55) rectangle (0.55,0.55);
  \end{scope}

  \node[font=\small, black!70] at (5.3, 1.5) {$\mathcal{C}(\cdot)$};
  \draw[arr, black!80, line width=2pt] (4.0,0) -- (6.2,0)
    node[midway, below, font=\scriptsize, black!60]{tangent proj};

  \begin{scope}[xshift=7.2cm]
    \node[font=\small, black] at (0, 1.5) {$\tilde{X}_0$};
    \foreach \col/\row/\op in {
      0/0/60, 1/0/25,  2/0/40,
      0/1/30, 1/1/50,  2/1/20,
      0/2/45, 1/2/22,  2/2/55}{
      \fill[gray!\op!white, draw=black!40]
        ({-0.62+\col*0.42},{0.22-\row*0.42}) rectangle ++(0.40,0.40);
    }
    \draw[black, line width=1.2pt, rounded corners=2pt]
      (-0.64,-0.64) rectangle (0.64,0.64);
  \end{scope}
\end{scope}

\begin{scope}[xshift=0cm,yshift=-1cm, yshift=0cm,scale=0.7]
  \foreach \col/\row/\op in {0/0/50, 1/0/25, 0/1/30, 1/1/45}{
    \fill[gray!\op!white, draw=black!40, thin]
      ({-0.23+\col*0.23},{0.01-\row*0.23}) rectangle ++(0.21,0.21);
  }
  \draw[black!70, rounded corners=2pt, line width=0.9pt]
    (-0.25,-0.25) rectangle (0.25,0.25);
  \foreach \x/\y in {-0.31/0.12, 0.31/0.18, 0.28/-0.28,
                      -0.26/-0.26, 0.34/0.0, -0.08/0.32}{
    \fill[black!55] (\x,\y) circle (1.6pt);
  }
  \draw[black!50, dashed, rounded corners=2pt, line width=0.8pt]
    (-0.38,-0.38) rectangle (0.38,0.38);

  \node[coordinate] (X0) at (0,0) {};
  \node[font=\scriptsize, black, below=10pt] at (0,0) {\large $\tilde{X}_0$};

  \node[pbox] (Pa) at (6.2, 1.2) {\large $P^a$};
  \node[pbox] (Pb) at (6.2,-1.2) {\large $P^b$};

  \draw[arr, black, bend left=18]      (X0) to (Pa);
  \draw[arr, black!70, bend right=18,
        dash pattern=on 4pt off 2pt]   (X0) to (Pb);

  \node[circle, fill=black!60, inner sep=3.5pt] (Xta) at (2.1, 0.5) {};
  \node[circle, fill=black!30, inner sep=3.5pt] (Xtb) at (2.1,-0.5) {};

  \draw[black!60, thick] (0.38,0) -- (Xta.west);
  \draw[black!60, thick] (Xta.east) -- (Pa.west);
  \draw[black!35, thick, dashed] (0.38,0) -- (Xtb.west);
  \draw[black!35, thick, dashed] (Xtb.east) -- (Pb.west);

  \node[vbox] (vnet) at (2.1,-2.2) {\large $v_\theta$};
  \draw[darr, black!70] (Xta) -- (vnet);
  \draw[darr, black!40] (Xtb) -- (vnet);

  \node[circle, draw=black, fill=gray!20, thick,
        inner sep=2.5pt, font=\scriptsize] (loss) at (2.1,-3.5) {$\mathcal{L}$};
  \draw[darr, black] (vnet) -- (loss);
\end{scope}

\begin{scope}[xshift=7.2cm, yshift=-1cm,scale=1.2]

  \begin{scope}[yshift=1.50cm]
    \noisygridicon{0/0/50, 1/0/25, 0/1/35, 1/1/55}%
                  {-0.24/0.22, 0.24/0.22, 0.24/-0.22, -0.24/-0.22, 0.00/0.25}
    \node[coordinate] (Xk1) at (0,0) {};
  \end{scope}

  \begin{scope}[yshift=0.75cm]
    \noisygridicon{0/0/60, 1/0/20, 0/1/40, 1/1/30}%
                  {0.25/0.18, -0.25/0.22, 0.22/-0.24, -0.23/-0.18, 0.00/-0.25}
    \node[coordinate] (Xk2) at (0,0) {};
  \end{scope}

  \begin{scope}[yshift=0.00cm]
    \noisygridicon{0/0/30, 1/0/55, 0/1/22, 1/1/60}%
                  {-0.25/-0.18, 0.14/0.25, 0.25/-0.10, -0.10/0.24, 0.23/0.12}
    \node[coordinate] (Xk3) at (0,0) {};
  \end{scope}

  \begin{scope}[yshift=-0.75cm]
    \noisygridicon{0/0/45, 1/0/55, 0/1/28, 1/1/40}%
                  {0.22/0.24, -0.25/0.12, 0.18/-0.24, 0.25/0.05, -0.10/-0.25}
    \node[coordinate] (Xk4) at (0,0) {};
  \end{scope}

  \begin{scope}[yshift=-1.50cm]
    \noisygridicon{0/0/55, 1/0/30, 0/1/48, 1/1/20}%
                  {-0.18/0.25, 0.25/0.18, -0.25/-0.14, 0.14/-0.25, 0.24/0.0}
    \node[coordinate] (Xk5) at (0,0) {};
  \end{scope}

  \begin{pgfonlayer}{background}

  \draw[flowbox]
    (-0.35, 1.80) rectangle (3.00,-1.80);
  \node[font=\large, black!55, fill=gray!5,
        inner sep=2pt]
    at (1.5, 2.08) {ODE integration };
   \end{pgfonlayer}
  \node[circle, fill=black!65, inner sep=2.0pt] (w1) at (2.2, 1.0) {};
  \node[circle, fill=black!55, inner sep=2.0pt] (w2) at (2.2, 0.5) {};
  \node[circle, fill=black!45, inner sep=2.0pt] (w3) at (2.2, 0.0) {};
  \node[circle, fill=black!30, inner sep=2.0pt] (w4) at (2.2,-0.5) {};
  \node[circle, fill=black!20, inner sep=2.0pt] (w5) at (2.2,-1.0) {};

  \draw[arr, black!70, bend left=5]  (Xk1) -- (w1);
  \draw[arr, black!60]               (Xk2) -- (w2);
  \draw[arr, black!50, bend right=5] (Xk3) -- (w3);
  \draw[arr, black!35, dashed,
        -{Stealth[length=6pt,width=4pt]},
        bend right=5]                (Xk4) -- (w4);
  \draw[arr, black!25, dashed,
        -{Stealth[length=6pt,width=4pt]},
        bend right=8]                (Xk5) -- (w5);

  \node[vbox, minimum width=1.5cm] (hung) at (4.0, 0) {$\mathrm{Round}$};

  \draw[arr, black!65, bend left=10]  (w1) -- (hung);
  \draw[arr, black!55, bend left=5]   (w2) -- (hung);
  \draw[arr, black!45]                (w3) -- (hung);
  \draw[arr, black!30, dashed,
        -{Stealth[length=6pt,width=4pt]},
        bend right=5]                 (w4) -- (hung);
  \draw[arr, black!20, dashed,
        -{Stealth[length=6pt,width=4pt]},
        bend right=10]                (w5) -- (hung);

  \foreach \k/\ys in {1/1.50, 2/0.75, 3/0.00, 4/-0.75, 5/-1.50}{
    \node[outbox] (out\k) at (5.8,\ys) {\large $\hat{P}^{(\k)}$};
  }
  \foreach \k in {1,2,3}{
    \draw[arr, black!60] (hung.east) -- (out\k.west);
  }
  \foreach \k in {4,5}{
    \draw[black!40, dashed,
          -{Stealth[length=6pt,width=4pt]}, line width=1.5pt]
      (hung.east) -- (out\k.west);
  }

\end{scope}

\draw[thick, black!25, dashed] (6.2, 1.8) -- (6.2,-4.0);

\end{tikzpicture}
\caption{Our model:  
noise is added to the uniform $X_0$ and projected to the  tangent space. 
Training learns a velocity field via conditional flow matching to the nearest valid permutation, while inference integrates the ODE and rounds to obtain candidate permutations.
}
\end{figure}

As shown in Figure~\ref{fig:1}, our model learns a generative flow directly in the space of permutation matrices.
We start from a uniform doubly stochastic matrix
$
X_0 = \tfrac{1}{n}\mathbf{1}\mathbf{1}^\top
$
and add Gaussian noise to obtain a perturbed initialization
$
\tilde{X}_0 = X_0 + \sigma_0 \varepsilon,
$ where $\sigma_0$ is noise scale and  $\varepsilon  \sim \mathcal{N}(0, I)$. 
To preserve the structural constraints of permutation matrices, the noise is projected onto the tangent space using $
\mathcal{C}(U)
$,
which enforces zero row and column sums. Given a noisy matrix $\tilde{X}_0$, we define a coupling with valid permutation targets using a
\emph{nearest-valid-target-wins} rule.
Among the set of valid permutations $\{P^1,\dots,P^M\}$,
the target permutation $P^\star$ is selected as
\begin{equation}
P^\star = \argmin_{P \in \{P^1,\dots,P^M\}} 
\|\tilde{X}_0 - P\|_F .
\end{equation}

This coupling defines the endpoint of the transport path used during training. We then sample interpolation points
\begin{equation}
X_t = (1-t)\tilde{X}_0 + t P^\star , \quad t \sim \mathcal{U}(0,1),
\end{equation}
and train a neural velocity field $v_\theta(X_t,t)$ using conditional flow matching to predict the transport dynamics that move samples toward the target permutation distribution.
The objective minimizes the discrepancy between the predicted velocity and the analytically defined conditional velocity along this path:
\begin{equation}
    \mathcal{L}_{\text{CFM}} = \mathbb{E}_{t, \tilde{X}_0, P^\star}
    \left[ \left\| v_\theta(X_t, h) - \dot{X} \right\|_F^2 \right],
\end{equation}
where $h = g_\phi(c)$ is the context encoding and $\dot{X} = P^\star - \tilde{X}_0$ is the 
analytically defined constant conditional velocity along the linear interpolation path.
The nearest-target coupling plays a central role: by anchoring each noisy initialization 
to its closest valid permutation, it ensures that trajectories originating from different 
regions of the $\B_a$ are directed toward distinct permutations. 
This prevents gradient cancellation, which would otherwise occur when conflicting 
targets impose opposing velocity directions on nearby points and encourages the  learned flow to develop bifurcating dynamics. As a result, the model captures the  full multimodal structure of the target distribution, assigning distinct trajectory branches to each valid permutation mode rather than collapsing to a single averaged solution.

At inference, $K$ independent noisy initializations: $ \tilde{X}_0^{(k)} = \mathcal{C}\!\left(\frac{1}{n}\mathbf{1}\mathbf{1}^\top + \sigma_0 \varepsilon^{(k)}\right)$, $ \varepsilon^{(k)} \sim \mathcal{N}(0, I)$, are drawn and projected onto the tangent space using $\mathcal{C}$. Each is evolved independently through the learned ODE dynamics via $S$ Euler steps: $X^{(k)} \leftarrow X^{(k)} + \frac{1}{S} \cdot v_\theta\!\left(X^{(k)}, h\right).$
Because each trajectory starts from a distinct noise seed, the learned velocity field routes them toward different modes of the permutation distribution, producing diverse continuous endpoints on the affine constraint manifold.
Each endpoint is then mapped to a discrete permutation via Hungarian rounding:
\begin{equation}\label{eq:rounding}
    \hat{P}^{(k)} = \mathrm{Round}\!\left(X^{(k)}\right),
\end{equation}
yielding a set of candidate permutations $\{\hat{P}^{(k)}\}_{k=1}^{K}$ that collectively cover the valid modes of the target distribution.

\paragraph{Difference from Entropy-regularized Sinkhorn}
Sinkhorn methods approximate permutations via entropy regularization and repeated row/column normalization. We instead learn a flow whose velocities are projected onto the $\B_a$, preserving the constraint at every step. We further show in Appendix~\ref{app:unbiased} that PermFlow is unbiased
in the population limit.

\section{Experiment Settings}
We evaluate our method on two tasks: a visual sorting problem with ambiguity, and a symmetric assignment problem with
structural ambiguity. Together they test whether the model is both
accurate on unambiguous inputs and able to represent multiple valid
permutations when they exist.
\subsection{Permutation Learning from Visual Sequences}
\label{sec:mnist}
We evaluate on a sequence sorting task built on MNIST.
Each input is a sequence of $N=9$ digit images in an arbitrary order; the
model must produce a distribution over permutation matrices that sort them
by ascending digit value.
\emph{Ambiguous} sequences contain one blended image
$\mathbf{x}_{\text{blend}} = \alpha\,\mathbf{x}_a + (1{-}\alpha)\,\mathbf{x}_b$
with $\alpha \sim \mathrm{Beta}(2,2)$ clipped to $[0.2, 0.8]$, admitting
two valid sorted orders $P^*_a$ and $P^*_b$, as shown in Figure~\ref{fig:digit-sort-task}.
Dataset construction and training details are given in
Appendix~\ref{app:mnist}.

\begin{wrapfigure}{r}{0.55\textwidth}
  \vspace{-0.5em}
  \centering
  \includegraphics[width=\linewidth]{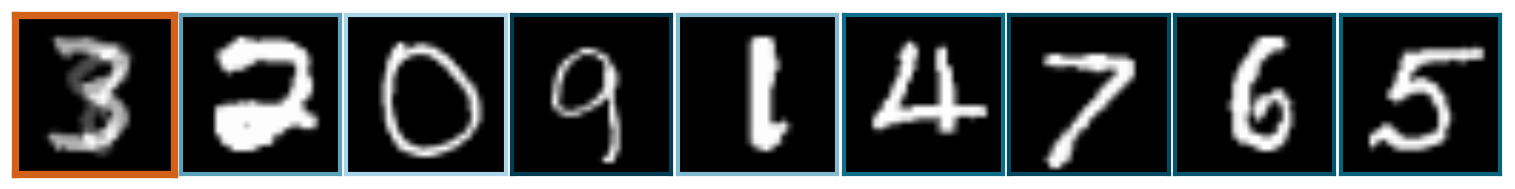}\\[0.3em]
  \includegraphics[width=\linewidth]{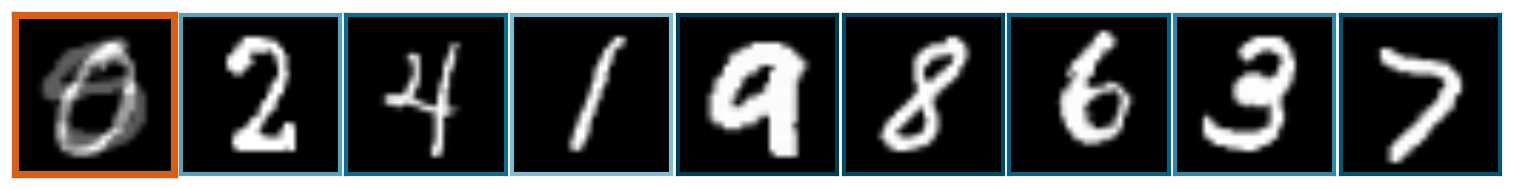}
  \caption{Illustration of the digit sorting task. Each input is a
  sequence of nine MNIST digits in arbitrary order, and the model must
  produce a permutation that sorts them in ascending order. The
  highlighted (orange) image in each row is a blended digit, which admits two valid sorted orderings
  $P^*_a$ and $P^*_b$ depending on which digit it is identified with.}
  \label{fig:digit-sort-task}
  \vspace{-1em}
\end{wrapfigure}
\begin{table}[htbp]
\centering
\caption{Full NoisyMNIST evaluation across $K$ ($N=9$,
  2{,}000 ambiguous test sequences).
  Gumbel-Sinkhorn results are shown at $\tau=0.2$ (best by clean
  accuracy $+$ coverage@10; all temperatures give ${\approx}0\%$
  on accuracy metrics).
  Clean accuracy uses a single sample.
  $\dagger$~GS calibration ${\approx}0.504$ is degenerate (neither mode
  found, $\hat{f}_a=0$, error $\approx\alpha$).}
\label{tab:mnist_full}
\begin{tabular}{llcccc}
\toprule
Method & $K$ & Clean acc.\ & Coverage@$K$ & Any-correct@$K$ & Cal.\ error \\
\midrule
\multirow{4}{*}{Gumbel-Sinkhorn}
  &   5 & 0.3\% & 0.0\% & 0.5\% & $0.504^\dagger$ \\
  &  10 & 0.3\% & 0.0\% & 0.4\% & $0.504^\dagger$ \\
  &  20 & 0.3\% & 0.0\% & 1.1\% & $0.504^\dagger$ \\
  &  50 & 0.3\% & 0.0\% & 2.9\% & $0.504^\dagger$ \\
\midrule
\multirow{7}{*}{\textbf{PermFlow}}
  &   5 & 98.0\% & 91.3\% & 98.9\% & 0.244 \\
  &  10 & 97.9\% & 98.2\% & 99.2\% & 0.214 \\
  &  20 & 97.9\% & 99.0\% & 99.4\% & 0.196 \\
  &  40 & 98.0\% & 99.2\% & 99.5\% & 0.184 \\
  &  60 & 98.0\% & 99.2\% & 99.4\% & 0.183 \\
  &  80 & 98.0\% & 99.2\% & 99.5\% & 0.181 \\
  & 100 & 98.1\% & 99.2\% & 99.6\% & 0.181 \\
\bottomrule
\end{tabular}
\end{table}

Clean sequences have a unique ground-truth permutation, so we draw $K$ samples  and report \textit{exact-match accuracy}.
Ambiguous sequences admit two valid permutations $P^*_a$ and $P^*_b$
with blend weight $\alpha$, and we draw $K$ samples per instance to
measure three properties: \textit{Coverage@$K$}, the fraction of
instances for which both $P^*_a$ and $P^*_b$ appear among the
$K$ samples (mode representation); \textit{Any-correct@$K$}, the
fraction for which at least one sample matches either valid
permutation (oracle accuracy); and \textit{calibration error}
$|\hat{f}_a - \alpha|$, the absolute gap between the empirical
fraction $\hat{f}_a$ of samples matching $P^*_a$ and the true blend
weight (0 = perfectly calibrated). Throughout, a prediction
$\hat{P}$ counts as matching a target $P^*$ only if
$\hat{P} = P^*$ exactly after rounding.

\begin{figure}[htbp]
  \centering
  \includegraphics[width=\linewidth]{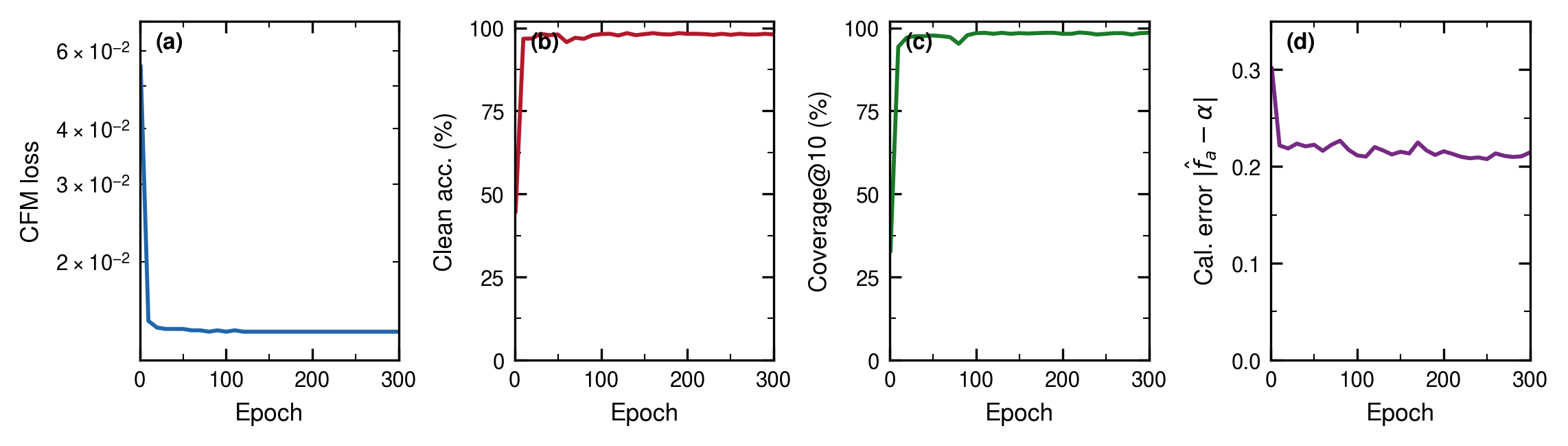}
  \caption{NoisyMNIST training dynamics.
    \textit{(a):} CFM loss.
    \textit{(b):} clean accuracy.
    \textit{(c):} mode coverage@10.
    \textit{(d):} calibration error $|\hat{f}_a - \alpha|$.
    All metrics stabilize within the first 10 epochs.}\label{fig:mnist_training}
\end{figure}
Figure~\ref{fig:mnist_training} shows training dynamics over 300 epochs.
The CFM loss drops sharply in the first ten epochs (0.056 $\to$ 0.015)
and plateaus at 0.014.
Clean accuracy reaches 96.7\% by epoch~10 and saturates at
${\approx}98\%$.
Mode coverage@10 rises to 94.3\% by epoch~10 and stabilises at
97--99\% thereafter, confirming the velocity field learns a bifurcating
structure early and maintains it.
Calibration error falls from 0.30 at epoch~1 to 0.22 by epoch~10 and
remains stable at ${\approx}0.21$ for the remainder of training,
indicating a persistent but bounded miscalibration between sampling
frequency and the true blend weight $\alpha$.

Table~\ref{tab:mnist_full} compares our model with
Gumbel-Sinkhorn.
Gumbel-Sinkhorn achieves 0\% coverage and ${\approx}0\%$ clean
accuracy at every temperature ($\tau \in \{0.05,\ldots,5.0\}$), with a
degenerate calibration error of ${\approx}0.504$.
The failure is structural: the score matrix produced by the velocity
network is a flow direction calibrated to the CFM objective, not an
assignment probability; Gumbel perturbations on this score yield
near-uniform doubly-stochastic matrices that round to arbitrary
permutations.
A detailed analysis is given in Appendix~\ref{app:mnist_gs}.

\subsection{Distribution Learning over Optimal Assignments}
\label{sec:slap}

The linear assignment problem asks: given $n$ agents and $n$ tasks with a
cost $C[i,j]$ for pairing agent $i$ with task $j$, find the one-to-one
matching that minimises the total cost $\sum_i C[i,P[i]]$.
We study the \emph{symmetric} variant (SLAP), where $C =
C^{\top}$.
Symmetry makes the problem structurally degenerate: any optimal matching
$P$ coexists with equally cheap alternatives that arise from swapping
matched pairs.
We focus on \emph{bimodal} instances, which have exactly two globally
optimal matchings $P_a$ and $P_b$ that differ by swapping a single pair
of assignments.
The Hungarian algorithm finds \emph{one} optimum in polynomial time; our
goal is harder, to learn a distribution that samples \emph{both} $P_a$ and
$P_b$ with equal frequency and never misses either.
Dataset construction is detailed in Appendix~\ref{app:slap}.

\begin{table}[ht]
\centering
\small
\caption{SLAP results across $K$ at $N=20$ and $N=100$ (2{,}000 bimodal test instances each). GS shown at $\tau=0.5$; all temperatures give 0\%. Clean accuracy and optimality gap use a single sample. $\dagger$~GS balance $=0$ is degenerate (no mode found); PermFlow's $0.0000$ at $N=100$ is genuine (both modes sampled equally).}
\label{tab:slap_full}
\begin{tabular}{llrccccr}
\toprule
Method & $N$ & $K$ & Clean acc.\ & Coverage@$K$ & Any-correct@$K$ &
  Opt.\ gap $\Delta$ & Balance \\
\midrule
Gumbel-Sinkhorn
  & 20 & any & 0.0\% & 0.0\% & 0.0\% & 0.52 & $0.000^\dagger$ \\
Gumbel-Sinkhorn
  & 100 & any & 0.0\% & 0.0\% & 0.0\% & 0.85 & $0.000^\dagger$ \\
\midrule
\multirow{7}{*}{\textbf{PermFlow}} & \multirow{7}{*}{20}
  &   5 & 94.5\% & 94.0\% & 94.3\% & 0.0217 & 0.0015 \\
& &  10 & 94.5\% & 94.5\% & 95.0\% & 0.0216 & 0.0014 \\
& &  20 & 94.2\% & 95.0\% & 95.4\% & 0.0217 & 0.0013 \\
& &  40 & 94.2\% & 95.3\% & 95.9\% & 0.0215 & 0.0012 \\
& &  60 & 94.6\% & 95.5\% & 96.0\% & 0.0217 & 0.0012 \\
& &  80 & 94.2\% & 95.5\% & 96.0\% & 0.0217 & 0.0009 \\
& & 100 & 94.3\% & 95.7\% & 96.2\% & 0.0217 & 0.0012 \\
\midrule
\multirow{7}{*}{\textbf{PermFlow}} & \multirow{7}{*}{100}
  &   5 & 81.8\% & 83.0\% & 83.0\% & 0.0254 & 0.0000 \\
& &  10 & 81.8\% & 83.9\% & 83.9\% & 0.0254 & 0.0000 \\
& &  20 & 82.2\% & 85.2\% & 85.2\% & 0.0254 & 0.0000 \\
& &  40 & 82.8\% & 85.5\% & 85.5\% & 0.0255 & 0.0000 \\
& &  60 & 81.9\% & 85.8\% & 85.8\% & 0.0254 & 0.0000 \\
& &  80 & 81.8\% & 86.5\% & 86.5\% & 0.0255 & 0.0000 \\
& & 100 & 81.9\% & 86.1\% & 86.1\% & 0.0255 & 0.0000 \\
\bottomrule
\end{tabular}
\end{table}

We evaluate clean instances with exact-match accuracy on a single sample. For bimodal instances we draw $K$ samples per instance and report: \textit{coverage@$K$}, the fraction for which both $P^*_a$ and $P^*_b$ appear among the $K$ samples; \textit{any-correct@$K$}, the fraction for which at least one sample matches either target; \textit{optimality gap} $\Delta = (\mathrm{cost}(\hat{P}) - \mathrm{cost}(P^*)) / |\mathrm{cost}(P^*)|$, the fractional cost excess of the first sample over the Hungarian optimum; and \textit{mode balance} $|\mathrm{freq}(P^*_a) - \mathrm{freq}(P^*_b)|$, the absolute difference between the fractions of samples matching each mode ($0$ = balanced, $1$ = collapse).

Figure~\ref{fig:slap_training} shows training dynamics at $N=20$. The CFM loss approaches zero by epoch 200, while clean accuracy and coverage@10 stabilise at $94$--$95\%$ by epoch 30, indicating that the velocity field learns to cover both modes early and consistently. The optimality gap plateaus at ${\approx}2.2\%$ and mode balance stays below $0.002$ throughout, confirming that the model assigns nearly equal probability to $P_a$ and $P_b$. PermFlow thus learns a velocity field biased toward the specific modes in the training distribution, and $X_t$-conditioning further causes trajectories starting near $P_a$ and $P_b$ to diverge, reaching each mode independently. Clean accuracy remains stable at ${\approx}94\%$ across sampling budgets, while coverage grows only modestly with $K$ (from $94.0\%$ at $K{=}5$ to $95.7\%$ at $K{=}100$), showing that most of the coverage budget is saturated after just five trajectories.

\begin{figure}[htbp]
  \centering
  \includegraphics[width=\linewidth]{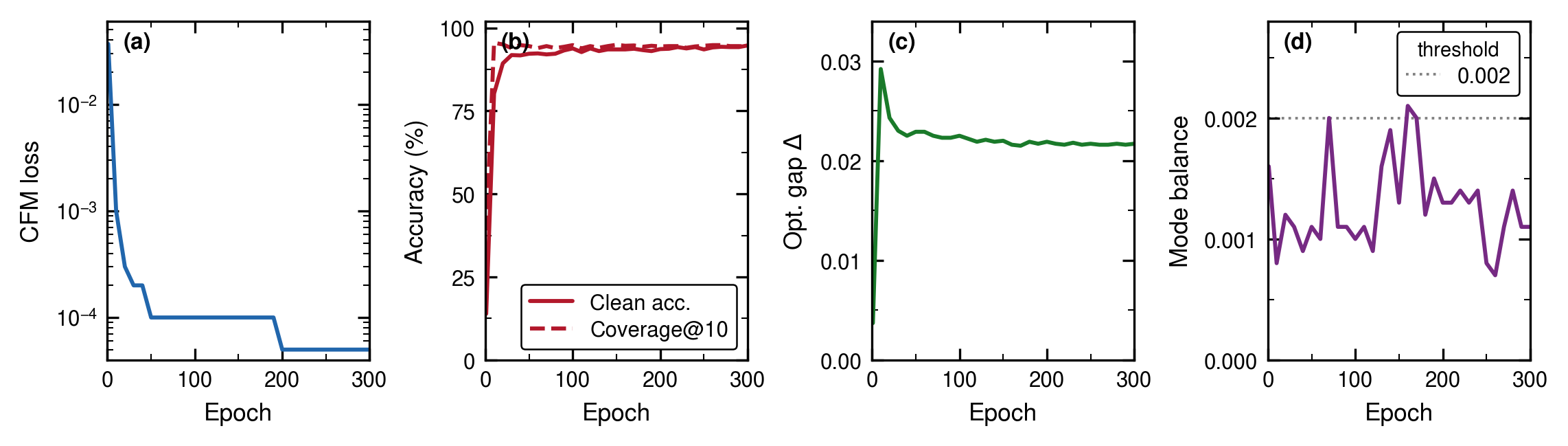}
  \caption{SLAP training dynamics.
    \textit{Left:} CFM loss.
    \textit{Second:} clean accuracy (solid) and coverage@10 (dashed).
    \textit{Third:} optimality gap.
    \textit{Right:} mode balance error (below 0.002 throughout).}
  \label{fig:slap_training}
\end{figure}

Table~\ref{tab:slap_full} compares PermFlow with Gumbel-Sinkhorn at two problem sizes, $N=20$ and $N=100$. Gumbel-Sinkhorn scores 0\% on every metric at every temperature in the sweep $\tau \in \{0.05,\ldots,5.0\}$, with an optimality gap of about 0.52 at $N=20$ and 0.85 at $N=100$. This is a structural failure, not a tuning issue. Because $C = C^{\top}$, every target $P_a$ has an inverse $P_a^{-1}$ with identical cost, so the score $-C$ cannot distinguish the labeled matching from its symmetric counterpart. Gumbel-Sinkhorn does not see the training labels and cannot resolve this ambiguity on its own. A full proof and per-temperature breakdown are given in Appendix~\ref{app:slap_gs}. PermFlow stays accurate as the problem size grows. At $N=20$ it reaches 94.5\% clean accuracy and 94.0--95.7\% coverage as $K$ ranges from 5 to 100. At $N=100$ it reaches 81.8\% clean accuracy and 83.0--86.5\% coverage on the same range. The optimality gap is small in both cases (about 0.022 at $N=20$ and 0.025 at $N=100$). Mode balance is essentially zero at $N=100$, so when PermFlow recovers the optimal modes it samples both with near-equal frequency, Figure~\ref{fig:slap_training_100} in appendix shows the training dynamics of $N=100$ .

\section{Conclusion} We presented PermFlow, a flow matching framework that learns distributions over permutations by flowing directly on the affine subspace of matrices with unit row and column sums. A closed-form tangent-space projector preserves feasibility exactly along every trajectory, and a nearest-target coupling routes distinct initializations toward distinct valid permutations, preventing the gradient cancellation that causes relaxation-based methods to collapse. Empirically, PermFlow recovers multiple valid permutations under ambiguity on both visual sorting and symmetric assignment, where Sinkhorn-based baselines fail structurally rather than from poor tuning. The results suggest geometry-aware flow matching as a viable alternative to entropic relaxation for combinatorial prediction problems where multiple valid solutions coexist.

\newpage

\bibliography{reference}
\bibliographystyle{plainnat}

\newpage
\appendix
\section{Proof of Theorem~\ref{thm:main}}
\label{app:proof}
 
The proof rests on a single algebraic identity. Let
\[
    J \;=\; \tfrac{1}{n} \1\1^\top
\]
be the rank-one averaging matrix. Note that $J$ is symmetric, $J^2 = J$
(so $J$ is itself an orthogonal projector on $\R^n$), and $I - J$ is its
orthogonal complement projector.
 
\begin{lemma}[Sandwich identity]
\label{lem:sandwich}
For all $U \in \R^{n\times n}$,
\[
    \C(U) \;=\; (I - J)\, U \,(I - J).
\]
\end{lemma}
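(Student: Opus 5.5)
The plan is simply to expand the right-hand side and match it term by term against the four-term definition of $\C$. Since matrix multiplication distributes over addition, I would write
\[
(I-J)\,U\,(I-J) \;=\; U - UJ - JU + JUJ,
\]
and then identify each of the three correction terms with the corresponding term in the definition of $\C(U)$.

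Substituting $J = \tfrac{1}{n}\1\1^\top$ gives the identifications directly. First, $UJ = \tfrac{1}{n}U\1\1^\top$, which is the row-mean term. Second, $JU = \tfrac{1}{n}\1\1^\top U$, which is the column-mean term. Third,
\[
JUJ = \tfrac{1}{n^2}\,\1\1^\top U \1\1^\top,
\]
where only associativity is used; the scalar $\1^\top U\1$ may be pulled out, but this is not even necessary. Matching signs, the two middle terms enter with a minus and the last with a plus, exactly as in the definition of $\C$. This establishes the identity for every $U$.

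There is no real obstacle here. The only step that needs care is the bookkeeping of the scalar factors $\tfrac{1}{n}$ versus $\tfrac{1}{n^2}$ and the signs in the expansion, so I would check those explicitly.

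One point is worth noting for later use, namely the consequences in Theorem~\ref{thm:main}. The proof of this lemma does not use $J^2=J$ or symmetry of $J$. Those facts enter only afterwards, when we deduce idempotency and self-adjointness of $\C$ from the sandwich form.
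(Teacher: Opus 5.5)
Your proposal is correct and matches the paper's proof: expand $(I-J)\,U\,(I-J) = U - JU - UJ + JUJ$ by distributivity, then substitute $J = \tfrac{1}{n}\1\1^\top$ to recover the four terms of $\C(U)$. Your remark that this step uses neither $J^2 = J$ nor the symmetry of $J$ is also accurate; those facts are needed only later, for idempotency and self-adjointness.
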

 
\begin{proof}
Direct expansion:
\[
    (I - J)\,U\,(I - J) \;=\; U - JU - UJ + JUJ.
\]
Substituting $J = \tfrac{1}{n}\1\1^\top$ gives
\[
    JU = \tfrac{1}{n}\1\1^\top U, \qquad
    UJ = \tfrac{1}{n} U \1\1^\top, \qquad
    JUJ = \tfrac{1}{n^2}\1\1^\top U \1\1^\top,
\]
which recovers $\C(U)$ exactly.
\end{proof}
 
With Lemma~\ref{lem:sandwich} in hand, the proof of Theorem~\ref{thm:main}
proceeds in three short steps.
 
\paragraph{Step 1: $\C$ is idempotent ($\C^2 = \C$).}
Since $(I - J)^2 = I - 2J + J^2 = I - J$,
\[
    \C^2(U) \;=\; (I-J)\bigl[(I-J)\, U\, (I-J)\bigr](I-J)
            \;=\; (I-J)^2 \, U \, (I-J)^2
            \;=\; (I-J)\, U\, (I-J) \;=\; \C(U).
\]
 
\paragraph{Step 2: $\im(\C) = T$ and $\C|_T = \id$.}
For any $U \in \R^{n\times n}$, using $\1^\top(I - J) = 0$ and $(I-J)\1 = 0$:
\[
    \C(U)\,\1 \;=\; (I-J)\, U\,(I-J)\1 \;=\; 0, \qquad
    \1^\top\, \C(U) \;=\; \1^\top(I-J)\, U\,(I-J) \;=\; 0.
\]
Hence $\im(\C) \subseteq T$. Conversely, if $\Delta \in T$, then $\Delta\1 = 0$
implies $\Delta J = 0$, and $\1^\top \Delta = 0$ implies $J\Delta = 0$. So
\[
    \C(\Delta) \;=\; (I-J)\,\Delta\,(I-J) \;=\; \Delta - J\Delta - \Delta J + J\Delta J \;=\; \Delta.
\]
Therefore $T \subseteq \im(\C)$ and $\C$ acts as the identity on $T$.
 
\paragraph{Step 3: $\C$ is self-adjoint.}
For any $A, B \in \R^{n\times n}$, using the cyclic property of the trace
together with $(I - J)^\top = I - J$:
\[
    \inner{\C(A)}{B}
    = \tr\bigl((I-J)\,A\,(I-J)\, B^\top\bigr)
    = \tr\bigl(A\,(I-J)\, B^\top\,(I-J)\bigr)
    = \inner{A}{\C(B)}.
\]
 
\paragraph{Conclusion of the projector claim.}
A linear map that is idempotent (Step~1) and self-adjoint (Step~3) is, by
definition, an orthogonal projector. Combined with the image
characterization (Step~2), this proves that $\C$ is \emph{the} orthogonal
projector onto $T$.
 
\paragraph{Flow invariance.}
Define $\phi(t) := X_t\1 - \1$. Then
\[
    \frac{d}{dt}\phi(t) \;=\; \dot{X}_t\, \1 \;=\; \C\bigl(f_\theta(X_t, t)\bigr)\, \1 \;=\; 0
\]
by Step~2. Since $\phi(0) = X_0\1 - \1 = 0$, we conclude $X_t \1 = \1$
for all $t \geq 0$. An identical argument applied to
$\psi(t) := X_t^\top \1 - \1$ gives $X_t^\top \1 = \1$. Hence $X_t \in \B_a$
for every $t \geq 0$.

\section{NoisyMNIST: Experimental Details}
\label{app:mnist}

\subsection{Dataset Construction}
Sequences of $N=9$ digit images are sampled from the standard MNIST
dataset.
Each sequence is either \emph{clean} or \emph{ambiguous}.

\paragraph{Clean sequences.}
Nine digit images are drawn without replacement from MNIST and arranged in
a random order.
The unique ground-truth permutation sorts them by ascending digit value.
Validation sequences use MNIST training images; test sequences use
exclusively MNIST test images.

\paragraph{Ambiguous sequences.}
One position in the sequence holds a blended image
\[
  \mathbf{x}_{\text{blend}} = \alpha\,\mathbf{x}_a + (1-\alpha)\,\mathbf{x}_b,
  \qquad \alpha \sim \mathrm{Beta}(2,2) \text{ clipped to } [0.2, 0.8].
\]
Blend pairs $(d_a, d_b)$ are drawn from a fixed set of eight pairs whose
digit values differ by at least three:
$\{(0,4),(1,5),(2,6),(3,7),(1,6),(2,7),(0,5),(3,8)\}$.
This guarantees that $d_a$ and $d_b$ occupy genuinely different sorted
positions, giving exactly two valid orderings $P^*_a$ and $P^*_b$.
The blend weight $\alpha$ is stored with each instance and used to
compute calibration error at evaluation time.

\paragraph{Dataset splits.}
The training set contains 100{,}000 sequences with a 50\% ambiguous
target.
Separate test sets of 2{,}000 clean and 2{,}000 ambiguous sequences are
held out; all test images are drawn from the MNIST test split.

\subsection{Model and Optimisation}

\paragraph{Architecture.}
Each digit image is encoded independently by a small CNN:
two convolutional blocks (16 and 32 filters, $3{\times}3$ kernels,
BatchNorm, ReLU, MaxPool $2{\times}2$) followed by a third block (64
filters) with adaptive average pooling, projecting to
$d_{\text{enc}}=64$ via a linear layer and LayerNorm.

The $N$ per-digit embeddings are projected to $d_{\text{model}}=128$,
augmented with learnable slot-position embeddings, and processed by a
4-layer Transformer (4 heads, FFN multiplier 4, dropout 0.1, GELU,
pre-norm).
A bilinear head produces the score matrix
$S[i,j] = \langle h_i,\, r_j\rangle/\sqrt{d}$,
where $h_i$ is the contextual embedding of slot $i$ and $r_j$ is a
learnable rank embedding; $S$ is tangent-projected to give the velocity
$v \in \mathbb{R}^{N\times N}$.

At inference, $K$ independent trajectories each start from a distinct
noisy uniform matrix
\[
  X_0 = T\!\left(\tfrac{1}{N}\mathbf{1}\mathbf{1}^{\top}
               + \sigma_0\,\varepsilon\right),
  \qquad \sigma_0 = 1.0,
\]
where $T(\cdot)$ is the tangent-space projection and $\varepsilon$ is a
doubly-centred unit-Frobenius noise matrix.
The velocity $v$ is fixed (does not depend on $X_t$), so different
starting points $X_0$ trace different trajectories and round to different
permutations via the Hungarian algorithm after 10 Euler steps.

\paragraph{Training.}
AdamW ($\mathrm{lr} = 3\times10^{-4}$, weight decay $10^{-4}$) with
cosine annealing to $\mathrm{lr}_{\min} = 10^{-5}$, 300 epochs, batch
size 256, mixed-precision, gradient clipping at norm 1.0, single seed
(42), single GPU.

\section{NoisyMNIST: Gumbel-Sinkhorn Comparison}
\label{app:mnist_gs}

\subsection{Method}
We use the trained model's encoder and velocity network to
extract a score matrix $S = \mathrm{velocity}(\mathrm{encoder}(\text{images})) \in \mathbb{R}^{N\times N}$,
then apply Gumbel-Sinkhorn sampling:
\[
  \hat{X}^{(k)} =
  \mathrm{Sinkhorn}\!\left(
    \frac{S + G^{(k)}}{\tau},\; n_{\mathrm{iters}}
  \right),
  \quad G^{(k)}_{ij} \sim \mathrm{Gumbel}(0,1) \text{ i.i.d.}
\]
Each $\hat{X}^{(k)}$ is rounded via the Hungarian algorithm.
We use log-domain Sinkhorn ($n_{\mathrm{iters}}=20$) and sweep
$\tau \in \{0.05, 0.1, 0.2, 0.5, 1.0, 2.0, 5.0\}$.

\subsection{Why Gumbel-Sinkhorn Fails on NoisyMNIST}

Gumbel-Sinkhorn achieves \textbf{${\approx}0\%$ clean accuracy and $0\%$
coverage at all temperatures and all $K$}
(Table~\ref{tab:mnist_full}).
Two structural causes combine to produce this failure.

\paragraph{Cause 1: velocity $\neq$ assignment score.}
The bilinear output $S[i,j] = \langle h_i, r_j\rangle/\sqrt{d}$ of the
velocity network is trained via the CFM loss to produce the correct
\emph{direction of movement} in the Birkhoff polytope — not an assignment
log-probability.
Applying Gumbel noise to this velocity score and running Sinkhorn
normalisation yields a doubly-stochastic matrix whose mass is spread
nearly uniformly, because $S$ was not calibrated to peak at the correct
permutation entries.
Hungarian rounding of a near-uniform matrix is dominated by small
numerical perturbations, giving effectively random permutations
(${\approx}0.1\%$ accuracy).

\paragraph{Cause 2: fixed velocity cannot resolve bimodal ambiguity.}
In the CFM model, diversity across $K$ trajectories arises from different
noisy starts $X_0$: the velocity $v = T(S)$ is the same for all
trajectories, but distinct starting points $X_0 + v$ land in different
regions of $\mathrm{DS}_N$ and round to different permutations.
Gumbel-Sinkhorn has no equivalent geometry — it adds noise directly to
the score rather than to the starting point of a flow — and with a
miscalibrated score $S$ it cannot recover either valid permutation.

\paragraph{Degenerate calibration.}
Since no valid permutation is ever found, $\hat{f}_a = 0$ for all
samples.
The blend weight $\alpha \sim \mathrm{Beta}(2,2)$ has mean $0.5$, so
calibration error $= |\hat{f}_a - \alpha| = \alpha \approx 0.5$, giving
the observed ${\approx}0.504$.

\section{SLAP: Experimental Details}
\label{app:slap}

\subsection{Dataset Construction}

Instances are generated synthetically with $C = C^{\top}$ maintained at
every step.
A noise base is formed as $C \gets (\tilde{C} + \tilde{C}^{\top})/2$ with
$\tilde{C}_{ij} \sim \mathcal{N}(0,\,0.25^2)$; all structured writes are
applied symmetrically to both $C[r,c]$ and $C[c,r]$.

\paragraph{Clean (unimodal) instances.}
Each row $k$ receives a distinct large negative bonus drawn from
$\mathrm{Unif}(1.5,\,2.5)$ at its target column $P_a[k]$, written
symmetrically.
Distinct bonuses ensure no pair of rows can be swapped without strictly
increasing the cost, giving a unique labeled optimum confirmed by the
Hungarian algorithm.

\paragraph{Bimodal instances.}
Starting from a random base permutation $P_a$, two rows $i,j$ are selected
and $P_b$ is formed by swapping their assignments.
Non-swap rows receive distinct bonuses as in the clean case.
The four critical entries
$\{(i,P_a[i]),\,(j,P_a[j]),\,(i,P_b[i]),\,(j,P_b[j])\}$
and their transposes are set \emph{last} to a common tied value
\[
  c_{\mathrm{tie}} \sim -\mathrm{BonusHi} - \mathrm{Unif}(0.5,\,1.5),
\]
chosen strictly below $-2.5$, guaranteeing $\mathrm{cost}(P_a) =
\mathrm{cost}(P_b)$ exactly.
Both modes are verified by the Hungarian algorithm; constructions that fail
fall back to clean instances.

\paragraph{Dataset splits.}
The training set contains 100{,}000 instances with a 50\% bimodal target;
49{,}078 of the 50{,}000 bimodal attempts fail the strict optimality check and
fall back to clean, giving an effective bimodal fraction of ${\approx}1\%$.
Separate test sets of 2{,}000 clean and 2{,}000 bimodal instances are drawn
from an independent random stream.

\subsection{Model and Optimization}

\paragraph{Architecture.}
Each row $C[i,:] \in \mathbb{R}^n$ is projected to $d_{\mathrm{model}} = 128$,
augmented with a learnable row-position embedding, and processed by a 2-layer
Transformer (4 heads, FFN multiplier 4, dropout 0.1, GELU, pre-norm),
yielding per-row contextual embeddings $h \in \mathbb{R}^{n \times d_{\mathrm{model}}}$.
The velocity network prepends a state token---the flattened current position
$X_t \in \mathbb{R}^{n^2}$ linearly projected to $d_{\mathrm{model}}$---to
the $n$ cost embeddings and processes the resulting $(n{+}1)$ tokens with a
4-layer Transformer of the same configuration.
A bilinear head produces $v \in \mathbb{R}^{n \times n}$, projected onto the
doubly-stochastic tangent space (zero row/column-sum directions) before each
Euler step.

At inference $K$ independent trajectories are initialised as
\[
  X_0 = \mathcal{C}\!\left(\tfrac{1}{n}\mathbf{1}\mathbf{1}^{\top}
                 + \sigma_0\,\varepsilon\right), \qquad \sigma_0 = 0.5,
\]
where $\mathcal{C}$ is the tangent-space projection and $\varepsilon$ is a
doubly-centred matrix with unit Frobenius norm.
After 20 Euler steps each trajectory is rounded to a hard permutation via the
Hungarian algorithm.

\paragraph{Training.}
AdamW ($\mathrm{lr} = 3 \times 10^{-4}$, weight decay $10^{-4}$) with cosine
annealing to $\mathrm{lr}_{\min} = 10^{-5}$, 300 epochs, batch size 256,
mixed-precision, gradient clipping at norm 1.0, single seed (42), single GPU.
The model has 1.25M parameters (encoder 402K, velocity network 848K) and
trains in ${\approx}36$ minutes on one H100.

\begin{figure}[htbp]
  \centering
  \includegraphics[width=\linewidth]{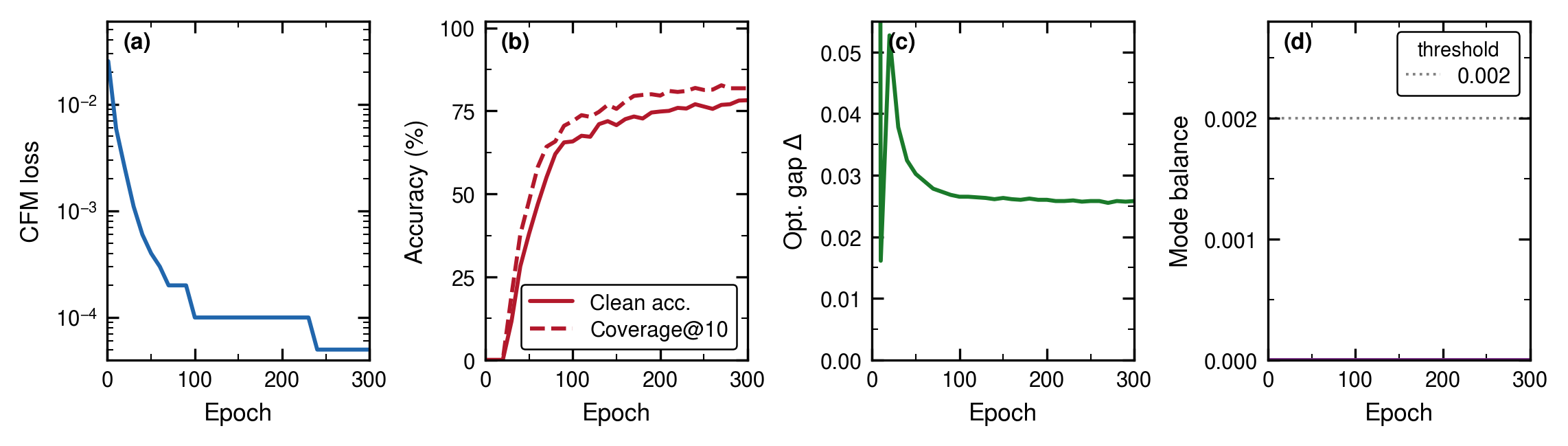}
  \caption{SLAP training dynamics ($N=100$).}
  \label{fig:slap_training_100}
\end{figure}

\section{SLAP: Gumbel-Sinkhorn Comparison}
\label{app:slap_gs}

\subsection{Method}

Gumbel-Sinkhorn~\citep{mena2018learning} draws $K$ independent samples by
adding iid Gumbel noise to the negated cost matrix and applying Sinkhorn
normalisation:
\[
  \hat{X}^{(k)} =
  \mathrm{Sinkhorn}\!\left(
    \exp\!\left(\tfrac{-C + G^{(k)}}{\tau}\right),\; n_{\mathrm{iters}}
  \right),
  \quad G^{(k)}_{ij} \sim \mathrm{Gumbel}(0,1) \text{ i.i.d.}
\]
Each $\hat{X}^{(k)}$ is rounded to a hard permutation via the Hungarian
algorithm.
We implement log-domain Sinkhorn for numerical stability (operating on
$(-C + G)/\tau$ throughout and exponentiating only at the end).
We sweep $\tau \in \{0.05, 0.1, 0.2, 0.5, 1.0, 2.0, 5.0\}$ with
$n_{\mathrm{iters}} = 20$ and evaluate at $K \in \{5,\ldots,100\}$.

\subsection{Why Gumbel-Sinkhorn Fails on SLAP}

Gumbel-Sinkhorn achieves \textbf{0\% on all metrics at all temperatures and
all $K$} (Table~\ref{tab:slap_full}).
Two structural causes combine to produce this failure.

\paragraph{Cause 1: cost-matrix symmetry removes the learning signal.}
Because $C = C^{\top}$, the score matrix $-C$ is also symmetric.
For any labeled target permutation $P_a$, its matrix inverse $P_a^{-1}$
satisfies
\[
  \mathrm{cost}(P_a^{-1})
  = \sum_{i} C\!\left[i,\, P_a^{-1}[i]\right]
  = \sum_{j} C\!\left[P_a[j],\, j\right]
  = \sum_{j} C\!\left[j,\, P_a[j]\right]
  = \mathrm{cost}(P_a),
\]
where the third equality uses $C = C^{\top}$.
Hence every bonus entry $(k,\,P_a[k])$ in the cost matrix has an equally
attractive symmetric partner $(P_a[k],\,k)$, and the score matrix $-C$ gives
identical weight to $P_a$ and $P_a^{-1}$.
Gumbel-Sinkhorn has no access to the training distribution and therefore
cannot prefer the labeled target over equally-cheap alternatives.
Any of the multiple optimal permutations (including $P_a^{-1}$, $P_b^{-1}$,
and other valid matchings on the bonus graph) may be returned; since the
evaluation checks only for $P_a$ and $P_b$, all other optimal assignments
count as wrong.

\paragraph{Cause 2: incomplete Sinkhorn convergence inflates the cost gap.}
With $n_{\mathrm{iters}} = 20$, the Sinkhorn iteration has not converged for
$n = 20$, leaving residual mass at non-bonus entries.
The doubly-stochastic matrix is partially spread, and the Hungarian rounding
produces a permutation that uses some non-bonus entries (near-zero cost
positions), giving a cost ${\approx}52\%$ above the true optimum.
This compounds the first failure: even if the method could in principle find
$P_a^{-1}$ (an equally optimal permutation), incomplete convergence means the
sample is often not even optimal.

\paragraph{Degenerate mode balance.}
Gumbel-Sinkhorn reports mode balance $= 0.000$, which may appear to indicate
perfect balance.
It is in fact degenerate: both hit counts are zero (neither $P_a$ nor $P_b$
is ever recovered), so $|\mathrm{freq}(P_a) - \mathrm{freq}(P_b)| = |0 - 0|
= 0$.

\section{Unbiasedness of PermFlow vs. Mode Collapse of Sinkhorn}
\label{app:unbiased}

This appendix formalizes the contrast claimed in the main text: PermFlow,
in the population limit, places positive probability on every valid
permutation mode, while Sinkhorn cannot place mass on more than one mode
regardless of temperature or randomization at the score level.

\subsection{Setup}

Fix a finite set of valid target permutations $\{P_1, \ldots, P_M\} \subset \B$,
and let $\mu_0$ be a probability distribution on $\B$ representing the
noisy initialization (e.g.\ $X_0 = \C(\tfrac{1}{n}\1\1^\top + \sigma_0 \varepsilon)$
with $\varepsilon \sim \mathcal{N}(0, I)$). The nearest-target coupling
assigns each initialization to the closest target in Frobenius norm:
\begin{equation}
    P^\star(X_0) \;=\; \argmin_{P \in \{P_1, \ldots, P_M\}} \|X_0 - P\|_F.
\end{equation}
This partitions $\B$ (up to a measure-zero boundary) into the Voronoi cells
\begin{equation}
    V_i \;=\; \{X \in \B : \|X - P_i\|_F < \|X - P_j\|_F
                \text{ for all } j \ne i\}, \qquad i = 1, \ldots, M.
\end{equation}

\subsection{Unbiasedness}

\begin{theorem}[Unbiasedness of PermFlow in the population limit]
\label{thm:unbiased}
Suppose $\mu_0(V_i) > 0$ for every $i$, and let $v_\theta^\star$ minimize the
population CFM objective
\begin{equation}
    L_{\mathrm{CFM}}(v) \;=\; \mathbb{E}_{t, X_0, P^\star}
    \Bigl[\,\bigl\|v(X_t, t) - (P^\star(X_0) - X_0)\bigr\|_F^2\,\Bigr],
\end{equation}
where $X_t = (1-t)X_0 + tP^\star(X_0)$. Let $X_1$ be the endpoint of the ODE
$\dot X_t = v_\theta^\star(X_t, t)$ started at $X_0 \sim \mu_0$, and let
$\hat P = \mathrm{Round}(X_1)$ be its Hungarian rounding. Then
\begin{equation}
    \Pr[\hat P = P_i] \;=\; \mu_0(V_i) \;>\; 0 \qquad \text{for every } i.
\end{equation}
\end{theorem}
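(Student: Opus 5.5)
The plan is to identify the population minimizer $v_\theta^\star$ explicitly, invoke the marginal-preservation property of flow matching, and then read off the law of $X_1$. The claim is about the \emph{law} of $\hat P$, not about individual trajectories. So I will not try to show that the ODE carries each $X_0$ to $P^\star(X_0)$, which may fail where straight paths from different Voronoi cells cross. Instead I will show that $X_1$ has the same distribution as $P^\star(X_0)$.

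\emph{Step 1 (minimizer).} $L_{\mathrm{CFM}}$ is a least-squares regression of the random target $P^\star(X_0)-X_0$ on $(X_t,t)$. Its minimizer over measurable fields is therefore the conditional expectation
\[
v_\theta^\star(x,t)=\mathbb{E}\bigl[P^\star(X_0)-X_0 \,\big|\, X_t=x\bigr],
\]
defined $p_t$-almost everywhere, where $p_t$ is the law of $X_t=(1-t)X_0+tP^\star(X_0)$. Since $X_0\in\B_a$ and every $P_i\in\B_a$, both $X_t$ and the conditional velocity live in $\B_a$ and $T$ respectively. Hence Theorem~\ref{thm:main} applies: the generated trajectory stays in $\B_a$, and all densities can be taken with respect to Lebesgue measure on this affine subspace.

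\emph{Step 2 (marginal preservation).} For each fixed pair $(X_0,P_i)$, the linear path $\psi_t=(1-t)X_0+tP_i$ has velocity $P_i-X_0$. Its law satisfies the continuity equation weakly, and so does the mixture $p_t$ once the velocity is averaged via the conditional expectation (the standard argument of \citet{lipmanflow} and \citet{tong2023improving}). So $(p_t,v_\theta^\star)$ solves $\partial_t p_t+\nabla\cdot(p_tv_\theta^\star)=0$ with $p_0=\mu_0$. Provided the ODE is well posed, uniqueness for the continuity equation gives $\mathrm{Law}(X_t)=p_t$ for the ODE solution on $[0,1)$, and letting $t\to1$ gives
\[
\mathrm{Law}(X_1)=p_1=\mathrm{Law}\bigl(P^\star(X_0)\bigr)=\sum_{i}\mu_0(V_i)\,\delta_{P_i},
\]
where the last equality uses that the Voronoi boundaries are $\mu_0$-null.

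\emph{Step 3 (rounding).} A permutation matrix $P_i$ is the unique maximizer of $\langle P_i,P\rangle$ over permutations, so the Hungarian step gives $\mathrm{Round}(P_i)=P_i$. Therefore $\Pr[\hat P=P_i]=\mu_0(V_i)$, which is positive by hypothesis.

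\emph{Main obstacle: Step 2.} The target law $p_1$ is atomic, so $v_\theta^\star$ typically blows up like $1/(1-t)$ near $t=1$. It is also only defined $p_t$-a.e., and it may be non-Lipschitz where straight paths from different cells intersect. I would handle this in three moves. First, work on $[0,1-\epsilon]$: there $p_t$ is the push-forward of an absolutely continuous $\mu_0$ under the maps $x\mapsto(1-t)x+tP_i$, which are injective on each cell, so $p_t$ has a density and the velocity is bounded on compacts. Second, assume, as the statement implicitly does, that the ODE flow exists and is unique, and apply the superposition/uniqueness principle for the continuity equation to get $\mathrm{Law}(X_{1-\epsilon})=p_{1-\epsilon}$. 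Third, pass to $\epsilon\to0$ using weak convergence $p_{1-\epsilon}\Rightarrow p_1$ together with continuity of trajectories at $t=1$. If the endpoint limit proves delicate, I would instead run the argument at $t=1-\epsilon$ and use that $\mathrm{Round}$ is locally constant near each $P_i$: mass within distance $O(\epsilon)$ of $P_i$ already rounds to $P_i$.
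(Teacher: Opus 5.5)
Your argument is correct, granting the well-posedness of the ODE that the statement implicitly assumes, but it takes a different route from the paper. The paper argues pathwise. For $X_0\in V_i$ it asserts that the optimal velocity along the segment from $X_0$ to $P_i$ is exactly $P_i-X_0$, so the ODE solution is that segment, $X_1=P_i$, and $\hat P=P^\star(X_0)$ for $\mu_0$-a.e.\ $X_0$. You instead prove only equality in law, $\mathrm{Law}(X_1)=\mathrm{Law}(P^\star(X_0))$, via the continuity equation, superposition/uniqueness, and a weak limit as $t\to1$.

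The paper's step needs a justification it does not give: $v_\theta^\star$ conditions on $X_t$, not on $X_0$, so segments from different cells must never meet. Contrary to the worry in your proposal, they cannot. Suppose $X_0\in V_i$, $X_0'\in V_j$ and $(1-t)X_0+tP_i=(1-t)X_0'+tP_j$ with $t<1$. Then $X_0'=X_0+\tfrac{t}{1-t}(P_i-P_j)$. Since all permutation matrices have equal Frobenius norm, $\|X-P_j\|_F^2-\|X-P_i\|_F^2=2\inner{X}{P_i-P_j}$, and
\[
\inner{X_0'}{P_i-P_j}=\inner{X_0}{P_i-P_j}+\tfrac{t}{1-t}\|P_i-P_j\|_F^2>0.
\]
So $X_0'$ is strictly closer to $P_i$ than to $P_j$, contradicting $X_0'\in V_j$. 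Equivalently, the nearest-target map is the gradient of the convex function $X\mapsto\max_i\inner{X}{P_i}$, i.e.\ a Brenier map, and its displacement interpolation is injective.

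Hence $p_t$ lives on the disjoint relatively open sets $(1-t)V_i+tP_i$. On each of them the conditional-expectation minimizer is $v_\theta^\star(x,t)=(P_i-x)/(1-t)$, which is smooth in $x$, so the ODE from $X_0\in V_i$ is uniquely solved by the straight segment. The well-posedness you had to assume therefore actually holds here, and the paper's stronger pathwise conclusion follows with no continuity-equation machinery.

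What your route buys is generality. It is the standard marginal-preservation argument and survives couplings whose paths do cross, e.g.\ the independent coupling. The price is heavier tools, an explicit well-posedness hypothesis, and a conclusion only in law rather than the trajectory-level identity $\hat P=P^\star(X_0)$.
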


\begin{proof}
The conditional CFM objective is minimized pointwise by the constant
conditional velocity along each straight-line interpolant:
\begin{equation}
    v_\theta^\star(X_t, t) \;=\; \mathbb{E}\bigl[\,P^\star(X_0) - X_0 \,\bigm|\, X_t\,\bigr].
\end{equation}
For $X_0 \in V_i$, the conditional target is constant: $P^\star(X_0) = P_i$
almost surely, so along the straight line from $X_0$ to $P_i$ the optimal
velocity is exactly $P_i - X_0$. Integrating from $t = 0$ to $t = 1$ yields
$X_1 = P_i$, which Hungarian rounding fixes as $P_i$. Hence
$\hat P = P_i$ whenever $X_0 \in V_i$, and the claim follows from
$\Pr[X_0 \in V_i] = \mu_0(V_i) > 0$.
\end{proof}

\begin{corollary}[Mode coverage]
Under the assumptions of Theorem~\ref{thm:unbiased}, drawing $K$ i.i.d.\
samples $X_0^{(1)}, \ldots, X_0^{(K)} \sim \mu_0$ and rounding their ODE
endpoints yields a sample that covers every mode with probability tending
to $1$ as $K \to \infty$.
\end{corollary}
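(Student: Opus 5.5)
The plan is to reduce the corollary to an independence-plus-union-bound argument on top of Theorem~\ref{thm:unbiased}. First I would make explicit that the sampling procedure is a deterministic measurable map applied to i.i.d.\ inputs. Let $\Phi$ denote the time-one flow map of $\dot X_t = v_\theta^\star(X_t,t)$, and set $R = \mathrm{Round}\circ\Phi$. I take $\Phi$ to be well defined and measurable $\mu_0$-almost everywhere, which is the same standing assumption of ODE existence used in Theorem~\ref{thm:main} and Theorem~\ref{thm:unbiased}. Then the rounded outputs $\hat P^{(k)} = R(X_0^{(k)})$, $k=1,\dots,K$, are i.i.d.\ because the $X_0^{(k)}$ are i.i.d.\ draws from $\mu_0$.

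Second, I would invoke Theorem~\ref{thm:unbiased} to get, for every $i$,
\[
p_i := \Pr[\hat P^{(k)} = P_i] = \mu_0(V_i) > 0 .
\]
Let $p_{\min} = \min_i p_i$. This is strictly positive because there are only finitely many modes, $M<\infty$.

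Third, I would bound the failure event with a union bound over modes. The event that some mode is missed is $\bigcup_{i=1}^M \{\hat P^{(k)} \neq P_i \ \forall k\}$. By independence across $k$,
\[
\Pr\Bigl[\exists\, i:\ P_i \notin \{\hat P^{(1)},\dots,\hat P^{(K)}\}\Bigr] \;\le\; \sum_{i=1}^M (1-p_i)^K \;\le\; M\,(1-p_{\min})^K .
\]
The right-hand side tends to $0$ geometrically as $K\to\infty$, which proves the claim. As a by-product it gives the explicit sample-size guarantee $K \ge \log(M/\delta)/p_{\min}$ for coverage with probability at least $1-\delta$, using $1-p \le e^{-p}$.

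The only step requiring care is the second: that each individual sample hits $P_i$ with probability exactly (or at least) $\mu_0(V_i)$. This rests entirely on Theorem~\ref{thm:unbiased}, which I will take as given. For the corollary it even suffices to have $\Pr[\hat P = P_i] \ge c_i > 0$ for some constants $c_i$, so any weakening of that theorem that still gives positive per-mode hit probability would make the argument go through unchanged. Beyond that, the remaining details are routine: measurability of $R$, and the fact that the Voronoi boundaries are $\mu_0$-null for Gaussian-type initializations.
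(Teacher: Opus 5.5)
Your proof is correct and follows the route the paper takes; the paper gives no separate proof of the corollary and treats it as immediate from Theorem~\ref{thm:unbiased}. Each i.i.d.\ sample hits mode $i$ with probability $\mu_0(V_i)>0$, so mode $i$ is missed with probability $(1-\mu_0(V_i))^K\to 0$, and a union bound over the finitely many modes finishes; your explicit rate $M(1-p_{\min})^K$ and the bound $K\ge \log(M/\delta)/p_{\min}$ are a harmless quantitative refinement of that argument.
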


\subsection{Sinkhorn is structurally biased}

\begin{proposition}[Sinkhorn cannot represent multimodal distributions]
\label{prop:sinkhorn}
Let $\mathcal{S}_\tau : \R^{n\times n} \to \B$ denote the entropy-regularized
Sinkhorn operator at temperature $\tau > 0$, and let $R : \B \to \mathcal{P}_n$
be any rounding rule (e.g.\ Hungarian). For any score matrix $S$,
\begin{equation}
    R\bigl(\mathcal{S}_\tau(S)\bigr) \in \mathcal{P}_n
\end{equation}
is a single permutation. In particular, for any distribution $\nu$ over
score matrices, the induced distribution over rounded outputs satisfies
\begin{equation}
    \Pr[\hat P = P_i] \in \{0, 1\}
\end{equation}
whenever $\nu$ is concentrated on a single $S$. More generally, the support
of the output distribution is at most a single permutation per score
realization, so a deterministic input $S$ can never produce mass on two
distinct permutations.
\end{proposition}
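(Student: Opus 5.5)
The statement to prove is Proposition~\ref{prop:sinkhorn}. It is essentially a determinism argument, and I would structure the proof around the observation that the pipeline $S \mapsto R(\mathcal{S}_\tau(S))$ is a well-defined function from $\R^{n\times n}$ to $\mathcal{P}_n$.

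\textbf{Step 1: $\mathcal{S}_\tau$ is single-valued.} For fixed $\tau>0$, the entropy-regularized problem $\max_{X\in\B_+}\langle S,X\rangle+\tau H(X)$ has a strictly concave objective, because $H$ is strictly concave on the positive orthant. The feasible set $\B_+$ is compact and convex. Hence the maximizer exists and is unique. It is the matrix $\mathrm{Diag}(u)K\mathrm{Diag}(v)$ with $K=\exp(S/\tau)$, as recalled in the Background section. The scalings $(u,v)$ are unique up to $u\mapsto cu$, $v\mapsto v/c$, and this ambiguity leaves the product unchanged. So $\mathcal{S}_\tau(S)$ is a single matrix.

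If instead one uses the finite-iteration algorithm, as in the experiments, the same conclusion holds trivially. Alternating normalization from a fixed initialization is a deterministic composition of maps.

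\textbf{Step 2: $R$ is single-valued.} I would require the rounding rule $R$ to be a function, which the statement implicitly assumes by writing $R:\B\to\mathcal{P}_n$. For Hungarian rounding, I would fix a deterministic tie-breaking rule, for example the lexicographically smallest optimal assignment. Under that rule the rounded output is one element of $\mathcal{P}_n$.

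\textbf{Step 3: conclude.} Composing Steps 1 and 2, $F:=R\circ\mathcal{S}_\tau$ is a map $\R^{n\times n}\to\mathcal{P}_n$. If $\nu=\delta_S$, the output law is the pushforward $F_\#\delta_S=\delta_{F(S)}$. Therefore $\Pr[\hat P=P_i]=\mathbf{1}[F(S)=P_i]\in\{0,1\}$.

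For general $\nu$, conditioning on the realization $S$ makes $\hat P$ degenerate at $F(S)$. That is the claim that there is at most one permutation per score realization. The final sentence of the proposition is then the contrapositive: a deterministic $S$ cannot place positive mass on two distinct $P_i\neq P_j$, since the two probabilities $\mathbf{1}[F(S)=P_i]$ and $\mathbf{1}[F(S)=P_j]$ cannot both be $1$.

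\textbf{Main obstacle.} The only step needing care is Step 2. If $R$ breaks ties randomly, the claim is false as stated. I would therefore state explicitly that a deterministic tie-breaking convention is part of the definition of $R$, which is what makes $R$ a function. Measurability of $F$, needed for the pushforward, is a minor point. $\mathcal{S}_\tau$ is continuous in $S$. The preimages of $R$ are finite unions of polyhedral regions intersected with $\B$, since each is cut out by linear inequalities comparing assignment values. So $F$ is Borel measurable.
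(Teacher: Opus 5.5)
Your proposal is correct and uses the same determinism argument as the paper. Since $\mathcal{S}_\tau$ is a single-valued function of $S$, composing it with a deterministic rounding rule gives a point mass $\delta_{R(\mathcal{S}_\tau(S))}$ for each fixed $S$. Your differences are refinements only: you justify single-valuedness by strict concavity of the entropic objective rather than the paper's appeal to uniqueness of the Sinkhorn fixed point, and you make explicit the deterministic tie-breaking and measurability that the paper leaves implicit in the phrase ``deterministic rounding rule.''
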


\begin{proof}
$\mathcal{S}_\tau$ is the unique fixed point of alternating row and column
normalization on $\exp(S/\tau)$ \citep{sinkhorn1964relationship}, hence a deterministic
function of $S$. Composition with any deterministic rounding rule remains
deterministic, so the output is a point mass for each fixed $S$.
\end{proof}

\subsection{Discussion}

Theorem~\ref{thm:unbiased} establishes unbiasedness for the \emph{optimal}
velocity field. The trained network $v_\theta$ approximates $v_\theta^\star$
but does not equal it, which is the source of the residual calibration error
($\approx 0.18$) reported in our experiments. This is a finite-sample training
gap, not a structural limitation of the method. Proposition~\ref{prop:sinkhorn},
in contrast, shows that no amount of training, tuning, or temperature
selection can give Sinkhorn nontrivial mode coverage on a fixed input: the
limitation is structural. Stochastic variants such as Gumbel-Sinkhorn
inject noise at the score level and so escape the deterministic argument
above, but as observed empirically in our experiments, their samples concentrate around a single soft
assignment rather than partitioning mass across distinct permutation modes.

\newpage
\end{document}